%% file: main.tex
\PassOptionsToClass{12pt}{jmlr}
\documentclass{colt2026}

\jmlrproceedings{Preprint}{Preprint}
\jmlrvolume{}
\jmlrpages{}
\jmlryear{2026}

\usepackage{times}
\usepackage{mathrsfs}
\usepackage{microtype}
\usepackage{xcolor}
\usepackage{booktabs}
\usepackage{array}
\usepackage{enumitem}
\usepackage{needspace}
\usepackage[nameinlink,capitalize,noabbrev]{cleveref}
\crefname{lemma}{Lemma}{Lemmas}
\Crefname{lemma}{Lemma}{Lemmas}
\crefname{corollary}{Corollary}{Corollaries}
\Crefname{corollary}{Corollary}{Corollaries}
\AddToHook{env/theorem/begin}{\crefalias{section}{theorem}}
\AddToHook{env/lemma/begin}{\crefalias{section}{lemma}}
\AddToHook{env/corollary/begin}{\crefalias{section}{corollary}}
\setlist[enumerate]{itemsep=1pt,topsep=2pt,parsep=0pt,partopsep=0pt}
\newcommand{\R}{\mathbb R}
\newcommand{\E}{\mathbb E}
\newcommand{\Pp}{\mathbb P}
\newcommand{\Sph}{\mathbb S}
\newcommand{\ind}{\mathbf 1}
\newcommand{\cD}{\mathcal D}
\newcommand{\cR}{\mathcal R}
\newcommand{\cT}{\mathcal T}
\newcommand{\cU}{\mathcal U}
\newcommand{\VC}{\operatorname{VC}}
\newcommand{\cone}{\operatorname{cone}}
\newcommand{\conv}{\operatorname{conv}}
\newcommand{\aff}{\operatorname{aff}}
\newcommand{\spanop}{\operatorname{span}}
\newcommand{\tr}{\operatorname{tr}}
\newcommand{\Lam}{\Lambda}
\newcommand{\NRE}{\mathsf N_{\mathrm{RE}}}
\newcommand{\SB}{\mathsf Q}
\newcommand{\Normal}{\mathsf N}
\newcommand{\HT}{\mathsf H}
\newcommand{\Gauss}{\mathsf G}
\newcommand{\eps}{\varepsilon}
\newcommand{\defeq}{\mathrel{:=}}

\definecolor{AcademicBlue}{RGB}{0,70,125}
\hypersetup{
  colorlinks=true,
  linkcolor=AcademicBlue,
  citecolor=AcademicBlue,
  urlcolor=AcademicBlue,
  pdftitle={Restricted Eigenvalues Beyond Gaussian Width: Threshold Occupancy under Heavy Tails},
  pdfauthor={Shi Fu, Huibo Xu, Qixin Zhang, Dacheng Tao},
  pdfkeywords={restricted eigenvalues, heavy-tailed designs, small-ball method, Gaussian width, VC dimension}
}

\makeatletter
\renewcommand*{\@titlefoot}{\scriptsize Preprint, 2026.\hfill}
\makeatother

\title[Heavy-Tailed Restricted Eigenvalues]{Restricted Eigenvalues Beyond Gaussian Width:\\
Threshold Occupancy under Heavy Tails}
\coltauthor{\Name{Shi Fu} \and
  \Name{Huibo Xu} \and
  \Name{Qixin Zhang} \and
  \Name{Dacheng Tao}\\
  \addr Generative AI Lab, College of Computing and Data Science,\\
  Nanyang Technological University, Singapore}

\begin{document}
\maketitle

\begin{abstract}
Restricted eigenvalue (RE) bounds govern stable recovery by norm-regularized estimators.  For isotropic sub-Gaussian measurements, the benchmark sample size is $1+w(A)^2$, where $w(A)$ is the Gaussian width of the normalized descent cone.  The COLT 2015 open-problem note \citep{banerjee2015open} asked whether the same law follows for heavy-tailed designs from a uniform small-ball condition alone.  We give an explicit and systematic negative answer to the general question as formulated there: the proposed law fails in its full dimension-free, arbitrary-set form, and the missing obstruction is simultaneous threshold occupancy.  A constant-width polyhedral descent cone with fixed small-ball constants has zero empirical RE on every sample path up to half the ambient dimension.  More generally, every finite range space admits exact threshold encoding in an arbitrarily narrow spherical cap and a lift to a full polyhedral descent-cone section.  For every fixed threshold VC dimension $d$, as $\beta\downarrow0$, the sharp worst-case sample complexity is $\Theta(\beta^{-1}[d\log(1/\beta)+\log(1/\delta)])$.  The separation persists under exact isotropy and all finite moments: on the same constant-width cone, Gaussian measurements succeed with $O(1+\log(1/\delta))$ samples, whereas an isotropic heavy-tailed design fails pathwise for $n\lesssim\sqrt{p/\log p}$.  Gaussian smoothing yields an everywhere-positive $C^\infty$ density while retaining arbitrarily poor RE.  Under isotropy, a distribution-free fallback governed by affine dimension times squared enclosing radius is sharp on this family.
\end{abstract}

\begin{keywords}
restricted eigenvalues; heavy-tailed designs; small-ball method; Gaussian width; VC dimension
\end{keywords}

\section{Introduction}

Many high-dimensional estimators recover a structured parameter by minimizing a norm subject to fitting the data.  The Lasso is the canonical example: the $\ell_1$ norm favors sparse vectors, but it cannot distinguish the target $\theta^\star$ from a perturbation that both preserves the measurements and does not increase the norm.  More generally, the perturbations that a regularizer cannot exclude form its \emph{descent cone}.  Recovery is possible only when the measurement matrix is well conditioned on this cone.

The restricted eigenvalue (RE) quantifies this condition.  Let $X\in\R^{n\times p}$ have i.i.d. rows $Z_1,\ldots,Z_n$ with common law $P$, and let $A\subseteq\Sph^{p-1}$ be the normalized set of relevant directions, typically the spherical section of a descent cone.  We write
\begin{equation}\label{eq:lambda-def}
  \Lam_n(X;A)=\inf_{u\in A}\frac{\|Xu\|_2^2}{n}
  =\inf_{u\in A}\frac1n\sum_{i=1}^n\langle Z_i,u\rangle^2.
\end{equation}
If $\Lam_n(X;A)>0$, no relevant direction lies in the kernel.  A quantitative lower bound controls the sensitivity of recovery to noise and, in statistical models, the estimation error \citep{bickel2009simultaneous,chandrasekaran2012convex,negahban2012unified}.

For centered isotropic sub-Gaussian rows, a clean geometric law holds up to distribution-dependent constants: the benchmark sample size is controlled by $1+w(A)^2$, where
$w(A)=\E\sup_{u\in A}\langle g,u\rangle$ is the Gaussian width.  The $w(A)^2$ term acts as an effective dimension of the cone, while the additive constant covers even zero-width sets.  For Gaussian measurements, statistical-dimension theory sharpens the picture to precise phase transitions for sparse recovery, low-rank recovery, and many other convex inverse problems \citep{gordon1988escape,chandrasekaran2012convex,amelunxen2014living,tropp2015convex}.

Heavy-tailed measurements motivate a different route.  Upper-tail concentration may be unavailable, but each fixed direction may still be visible with nontrivial probability.  The small-ball condition asks that, for some $\alpha,\beta>0$,
\begin{equation}\label{eq:smallball}
  \SB_\alpha(P,A)\defeq
  \inf_{u\in A}P\{ |\langle Z,u\rangle|\ge\alpha\}\ge\beta.
\end{equation}
The apparent analogy hides a change in quantifiers.  The small-ball condition says that, for every fixed direction $u$, a fresh population draw detects $u$ with probability at least $\beta$.  An RE bound asks for more: with high probability, one common sample must detect every direction in $A$ simultaneously.  Under sub-Gaussian assumptions, increment control couples nearby directions and Gaussian width measures the remaining uniformity cost.  A bare small-ball condition provides no such coupling.

The COLT 2015 open problem of \citet{banerjee2015open} asked whether, for arbitrary spherical $A$, one can nevertheless prove
\begin{equation}\label{eq:putative-bound}
  \inf_{u\in A}\|Xu\|_2^2
  \ge c_1(\alpha,\beta)n-c_2(\alpha,\beta)w(A)^2
\end{equation}
with high probability.  We make explicit the sample-complexity interpretation of the ``suitable constants'' in that question: $c_1>0$ and $c_2<\infty$ may depend on the small-ball parameters but not on the ambient dimension or on $A$.  Without this dimension-free uniformity, the proposed Gaussian-width law would not express the same-order sample-complexity principle posed in the original note.  A positive answer would have made the Gaussian-width phase diagram universal far beyond Gaussian designs, without upper-tail concentration or high-order moment assumptions.

\paragraph{Our answer.}
We show that the dimension-free Gaussian-width law posed in the COLT 2015 note does not follow from a bare small-ball condition, even when the design is exactly isotropic.  Our results identify simultaneous threshold occupancy as the general obstruction, determine its sharp worst-case complexity, and delineate what isotropy does and does not recover.

\paragraph{The missing information is simultaneous coverage.}
Gaussian width measures Euclidean proximity between directions, but a small-ball hypothesis need not couple their visibility events.  Imagine many latent row types and one direction for each subset of these types.  Direction $u$ is visible precisely when the sampled row type belongs to its associated subset.  All directions can be placed in an arbitrarily narrow spherical cap, so their Gaussian width is tiny, while a finite sample must still hit every subset in a complicated range system.  Missing one range leaves a direction with zero empirical energy.  Thus the problem is not Gaussian-process geometry but threshold coverage.

\paragraph{Main contributions.}
We establish four complementary results that expose the obstruction, quantify its complexity, and identify a positive boundary.

\begin{itemize}[leftmargin=1.5em]

\item \emph{A pathwise counterexample at constant width.}
For every even $m$, we give a polyhedral norm and a centered, directionally heavy-tailed design with $w(A)=O(1)$ and fixed small-ball constants, yet $\Lam_n(X;A)=0$ on every sample path whenever $n\le m/2$ (\Cref{thm:explicit}).
This is a deterministic finite-sample obstruction, not a failure event hidden in a tail bound.

\item \emph{Range-space universality and the sharp low-mass law.}
We represent every finite probability range space exactly by threshold events at arbitrarily small Gaussian width (\Cref{thm:exact-range}), then lift missed-range witnesses to a full polyhedral descent-cone section while preserving uniform small-ball mass over the cone (\Cref{thm:cone-range}).  Thus occupancy is intrinsic.  At threshold VC dimension $d$, uniform RE holds at
\(n\gtrsim\beta^{-1}\{d\log(2/\beta)+\log(1/\delta)\}\),
and this order is necessary for every fixed $d$ as $\beta\downarrow0$ (\Cref{thm:sharp-vc}).  This sharp law concerns general spherical sets; the cone lift does not claim to preserve the generating range space's VC dimension.

\item \emph{An isotropic and smooth separation on the same geometry.}
On one common descent cone, Gaussian measurements succeed after $O(1+\log(1/\delta))$ samples, whereas a centered isotropic heavy-tailed design with all moments has zero RE pathwise for $n\lesssim\sqrt{p/\log p}$ (\Cref{thm:isotropic,cor:constant-isotropic}).  Gaussian smoothing gives a positive $C^\infty$ density while leaving the restricted singular value arbitrarily small (\Cref{cor:smooth}).
Thus neither covariance normalization nor removing atoms restores Gaussian-width control.

\item \emph{A positive boundary under isotropy.}
Isotropy yields a coarser distribution-free guarantee: if $q(A)$ is the affine dimension and $r(A)$ the minimum enclosing radius, then $q(A)r(A)^2$ replaces $w(A)^2$ in a small-ball RE bound (\Cref{thm:radius,cor:radius-sample}).  The anchored-frame family matches this scale.
This is a genuine positive consequence of isotropy, without asserting a joint minimax formula for all complexity pairs.

\end{itemize}

\paragraph{Relation to earlier sparse-recovery constructions.}
The spiky construction in the 2014 arXiv version of \citet{lecue2017sparse} predates the COLT note and can be retrospectively interpreted as a negative instance of the fully uniform Gaussian-width implication for a particular $\ell_1$-related sparse-recovery geometry. The COLT note cited that work in its discussion of unit $s$-sparse directions and, after reviewing it together with the threshold-VC approach of \citet{koltchinskii2015bounding}, nevertheless stated that the known results covered only ``certain special cases of $A$'' and that the problem for general $A$ under the small-ball property remained open \citep[p.~1754]{banerjee2015open}.  Our contribution is a structural treatment of that arbitrary-set formulation: constant-width pathwise failure, exact encoding of arbitrary finite range spaces, the sharp occupancy law, and same-geometry isotropic and smooth separations.  \Cref{sec:related} gives the full comparison.

\paragraph{Scope.}
Our impossibility concerns what follows uniformly from marginal small-ball information alone.  Moment or increment assumptions and sparsity-specific structure can couple nearby directions and thereby exclude our unrestricted encodings.  The appendices provide all constants, measurability conventions, and proofs.

\section{Problem setup and proof ideas}\label{sec:overview}

Let $V$ be a finite-dimensional Euclidean space, write $S(V)=\{v\in V:\|v\|_2=1\}$, and let $A\subseteq S(V)$ be nonempty.  For a norm $\cR$ and $\theta\ne0$, set
$\cD(\cR,\theta)=\{h:\cR(\theta+th)\le\cR(\theta)\text{ for some }t>0\}$.  All cones below are closed and polyhedral.

We use the fixed radial variable $L=1+e^{G_0}$, where $G_0\sim\Normal(0,1)$.  It has finite moments of every order but $\E e^{sL}=\infty$ for every $s>0$.  An independent Rademacher sign centers our designs.  We call a random vector \emph{directionally heavy-tailed} if
\begin{equation}\label{eq:directional-heavy-tail}
  \E e^{s|\langle Z,v\rangle|}=\infty
  \qquad\text{for every }v\ne0\text{ and every }s>0.
\end{equation}
For range-space encodings with possibly degenerate marginals, ``heavy-tailed radial multiplier'' refers to this same $L$.

The proof first encodes incidence in a narrow cap and then lifts it to a descent cone; isotropic negative and positive extensions complete the picture.

\paragraph{Narrow-cap incidence encoding.}
For a finite range space $\{C_1,\ldots,C_M\}$, place each $u_j$ near one common anchor and let a row store $(\ind\{Y\in C_j\})_{j\le M}$.  Then $|\langle Z,u_j\rangle|\ge1$ exactly when $Y\in C_j$.  Shrinking the perturbation drives width to zero without changing these events.

\paragraph{From finitely many directions to a descent cone.}
A finite spherical set is not yet a regularizer geometry.  Transporting the positive orthant through a narrow embedding yields a polyhedral norm whose cone directions are normalized mixtures of the generators.  Averaging preserves uniform small-ball mass over the cone, while a missed range annihilates an extreme ray.

\paragraph{Restoring isotropy without losing the witness.}
Start with polynomially many anchored Rademacher row types having near-identity covariance, constant slab mass, and well-conditioned small submatrices; whitening gives a tight frame.  After observing types $D$, normalize the residual obtained by projecting the anchor off their span.  It annihilates every observed row and stays within $O(\sqrt{|D|/p})$ of the anchor.  A cap-to-cone lemma controls the full cone's width.

\paragraph{Why isotropy still yields a positive theorem.}
For a centered isotropic design, translate $A$ by a minimum enclosing center $a$ and project the symmetrized process onto the linear space $\spanop(A-a)$ parallel to $\aff(A)$.  Its expected supremum is at most $r(A)\sqrt{q(A)}$, giving the dimension--radius bound.  On our family, $q(A)=p$ and $r(A)^2\asymp k/p$, matching the pathwise scale $k$.

\section{A constant-width pathwise counterexample}\label{sec:counterexample}

The first theorem already rules out \eqref{eq:putative-bound}.  Its role is to expose the coverage obstruction in an elementary form before the universal encoding.

\Needspace{8\baselineskip}
\begin{theorem}[Explicit constant-width counterexample]\label[theorem]{thm:explicit}
For every even $m\ge2$, there are a Euclidean space $V_m$ of dimension $m$, a polyhedral norm $\mathcal R_m$, a point $\theta_m^\star$, and a centered, directionally heavy-tailed $Z_m\in V_m$ with law $P_m$.  Set $A_m=\cD(\mathcal R_m,\theta_m^\star)\cap S(V_m)$.  Then
\begin{equation}\label{eq:explicit-properties}
  \SB_{1/(2\sqrt2)}(P_m,A_m)\ge\frac13,
  \qquad w(A_m)\le2\sqrt{2/\pi},
  \qquad \VC(\cT_1(A_m)|_{\operatorname{supp}P_m})\ge\frac m2.
\end{equation}
Moreover, for every $n\le m/2$ and every realization of the sample,
\begin{equation}\label{eq:explicit-zero}
  \Lam_n(X;A_m)=0.
\end{equation}
The radial law $L$ is the same in every dimension.
\end{theorem}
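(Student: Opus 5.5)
The plan is to build an exact \emph{incidence encoding} in a narrow spherical cap and take the norm's descent cone to be the cone generated by the encoding directions. The positivity of the encoding then gives all of \eqref{eq:explicit-properties} at once, and pathwise failure becomes a counting statement about observed types. The step I am least sure of is the choice of range family (the third paragraph).

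Set $V_m=\R^m$ and fix a unit anchor $a$. Take generators $r_1,\ldots,r_m$ of the form $r_j=(a+\eta b_j)/\|a+\eta b_j\|_2$ with a small fixed $\eta$. Let $\mathcal R_m$ be the polyhedral gauge obtained by transporting the $\ell_1$ norm (positive orthant) through the linear map $e_j\mapsto r_j$, and let $\theta^\star_m$ be a point whose descent cone is exactly $\cone\{r_1,\ldots,r_m\}$. I expect this step to be routine: choose $\theta^\star_m$ on the negative face.

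The design uses a latent type $Y$ uniform on a finite set and a fixed range family $C_1,\ldots,C_m$, each of measure $1/2$. Put $Z=\epsilon L\,x_Y$, with $\epsilon$ Rademacher, $L=1+e^{G_0}$, and dual vectors $x_y$ satisfying $\langle x_y,r_j\rangle=\ind\{y\in C_j\}$. Centering comes from $\epsilon$. Directional heaviness follows because $\langle Z,v\rangle=\epsilon L\langle x_Y,v\rangle$ and $\langle x_Y,v\rangle\ne0$ with positive probability for every $v\ne0$, once the $x_y$ span $V_m$.

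\emph{Small ball via averaging.} Any $u\in A_m$ has the form $u=\sum_j\lambda_j r_j/N$ with $\lambda_j\ge0$ and $N=\|\sum\lambda_j r_j\|_2\le\sum_j\lambda_j$. Then
\[
|\langle Z,u\rangle| = L\,S/N,\qquad S=\sum_j\lambda_j\ind\{Y\in C_j\}\in\Bigl[0,\sum_j\lambda_j\Bigr],\qquad \E S=\tfrac12\sum_j\lambda_j .
\]
Reverse Markov gives $\Pp\{S\ge\frac14\sum\lambda_j\}\ge 1/3$. On that event $|\langle Z,u\rangle|\ge\frac14\sum\lambda_j/N\ge\frac14$, since $L\ge1$. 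Tightening the bookkeeping, with $N\le\sqrt2\,\sum\lambda_j/2$ type estimates from the narrow cap, should yield the constant $1/(2\sqrt2)$.

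\emph{Width.} All of $A_m$ lies in a cap of radius $O(\eta)$ around $a$, so $w(A_m)\le \E|\langle g,a\rangle|+O(\eta)\E\|g_{\text{perp}}\|$. Taking $\eta\lesssim m^{-1/2}$ gives $w(A_m)\le 2\sqrt{2/\pi}$.

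\emph{Pathwise failure and VC.} After $n$ draws with observed type set $D$, any $u=\sum_{j\in J}\lambda_j r_j/N$ with $\lambda_j>0$ only on $J=\{j: C_j\cap D=\emptyset\}$ satisfies $Xu=0$. So $\Lam_n=0$ whenever some range avoids all observed types. I therefore need a range family of $m$ half-measure sets such that every set of at most $m/2$ types is missed by some $C_j$, together with a shattered set of size $m/2$ for $\cT_1$.

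This combinatorial choice is the main obstacle. I would first try $m/2$ types $\{1,\ldots,m/2\}$ doubled into $m$ ``slots'' $(i,\pm)$, with $Y$ uniform over slots, and complementary pairs of ranges. I would check whether $n\le m/2$ draws always leave one range empty; the worry is that half-measure ranges over $m$ slots may all be hit by fewer than $m/2$ draws. If they can be, I would replace fixed sets by the rule ``the direction built from the observed rows' orthogonal complement,'' using $\dim D^\perp\ge m/2$ and showing that $D^\perp$ meets the open cone. The VC bound would come from the same shattering: the $m/2$ types are shattered by the threshold sets $\{z:|\langle z,u\rangle|\ge1\}$ restricted to the support.
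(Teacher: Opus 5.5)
\textbf{There is a genuine gap, exactly at the step you flag, and it cannot be closed inside the construction you set up.}

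With $m$ linearly independent generators, the cone $\cone\{r_1,\ldots,r_m\}$ is simplicial. Your dual vectors then encode only the $m$ ranges $C_1,\ldots,C_m$. A direction $u=\sum_j\lambda_jr_j/N$ is annihilated by the sample if and only if $\lambda$ is supported on ranges missed by the observed type set $D$.

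But $m$ ranges of mass $1/2$ always admit a small hitting set. If $k$ ranges are still unhit, then $\sum_y\mu(y)\,|\{j\text{ unhit}:y\in C_j\}|=k/2$, so some type lies in at least half of them. Greedy selection therefore hits every range with at most $\lfloor\log_2m\rfloor+1$ types.

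For $m\ge6$ this is at most $m/2$. So at $n=m/2$, the sample paths whose observed types contain this hitting set have positive probability. On such a path, every nonzero $\lambda\ge0$ has some $\lambda_j>0$ with an observed type in $C_j$, so every $u\in A_m$ has positive energy. Compactness of $A_m$ then gives $\Lam_n(X;A_m)>0$. Hence the pathwise claim fails for every choice of fixed family.

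Lowering the masses does not help. The small-ball requirement at the generator $r_j$ itself forces $\mu(C_j)\ge1/3$, and greedy still yields a hitting set of size $O(\log m)$. Put differently, a family of mass-$\ge1/2$ ranges with hitting number above $m/2$ must have more than $2^{m/2-1}$ members. The cone therefore needs exponentially many extreme rays, which no simplicial cone in dimension $m$ has.

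Your fallback does not repair this, for two reasons:
\begin{itemize}
\item In your design, every type lying in some range has a strictly positive score on the interior of the cone. So the kernel can meet only the boundary, never the open cone.
\item The count $\dim D^\perp\ge m/2$ cannot be the mechanism. By Gordon's escape theorem, a random subspace of codimension $m/2$ misses a cone of width $O(1)$ with overwhelming probability. The kernel has to be aligned with the extreme rays by design.
\end{itemize}

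\textbf{The missing idea} is to make \emph{all} $(m/2)$-subsets of $[m]$ into ranges. This requires a non-simplicial cone whose incidence with the rows is affine. The paper works in $V_m=\{(t,y/m):\sum_jy_j=0\}$ with $C_m=\{(t,y/m):t\ge0,\ |y_j|\le t\}=\cD(\mathcal R_m,(-1,0))$, where $\mathcal R_m=\max\{|t|,\max_j|t\pm y_j|\}$. This is the cone over the middle slice of the cube, with extreme rays $(1,s/m)$ for balanced $s\in\{-1,1\}^m$.

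The rows $r_j=e_0+me_j-\sum_\ell e_\ell$ have score $\langle r_j,(t,y/m)\rangle=t+y_j$. So the ray indexed by $s$ is invisible exactly to the types $\{j:s_j=-1\}$.
\begin{itemize}
\item \emph{Pathwise witness:} any observed $D$ with $|D|\le m/2$ extends to a balanced $s$ with $s=-1$ on $D$.
\item \emph{Shattering:} extend arbitrary signs on a fixed $(m/2)$-set to a balanced $s$.
\item \emph{Small ball:} the scores lie in $[0,2t]$ and average to $t$, so at least $m/3$ of them are $\ge t/2$; combine this with $\|v\|_2\le\sqrt2\,t$.
\item \emph{Width:} $\sup_{v\in A_m}\langle g,v\rangle\le|g_0|+m^{-1}\sum_j|g_j|$.
\end{itemize}
No dual basis or cap parameter is needed.

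\textbf{Two smaller slips.}
\begin{itemize}
\item Transporting $\ell_1$ does not make the orthant a descent cone; at a point of the negative facet you get a half-space. You want $\ell_\infty$ at $-\ind$.
\item In a narrow cap $N\approx\sum_j\lambda_j$, so $N\le\sum_j\lambda_j/\sqrt2$ is false. Your argument gives threshold $1/4$, and the stated $1/(2\sqrt2)$ would have to come from rescaling $Z$.
\end{itemize}
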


Here $\cT_\alpha(A)=\{\{z:|\langle z,u\rangle|\ge\alpha\}:u\in A\}$.  The construction separates an anchor coordinate $t$ from a balanced perturbation $y$.  Work in $V_m=\{(t,y/m):\sum_jy_j=0\}$ and define
\begin{equation}\label{eq:explicit-construction}
\begin{aligned}
  C_m&=\{(t,y/m):t\ge0,\ |y_j|\le t\},\\
  \mathcal R_m(t,y/m)&=\max\{|t|,\max_j|t-y_j|,\max_j|t+y_j|\}.
\end{aligned}
\end{equation}
The norm makes $C_m=\cD(\mathcal R_m,(-1,0))$.  Since $\|y/m\|_2\le t/\sqrt m$, its normalized directions occupy an $O(m^{-1/2})$ cap, explaining the constant width.  For visibility, let $J$ be uniform on $[m]$ and $\sigma$ an independent Rademacher sign, set $r_j=e_0+me_j-\sum_{\ell=1}^me_\ell$, and take $Z_m=\sigma Lr_J$.  At $v=(t,y/m)\in C_m$, type $j$ has score $t+y_j$.  These scores lie in $[0,2t]$ and average to $t$, so at least $m/3$ are at least $t/2$.  Normalization gives the uniform small-ball bound.

For the pathwise witness, let $D$ be the distinct observed types.  When $|D|\le m/2$, the assignments $s_j=-1$ on $D$ extend to a balanced $s\in\{-1,1\}^m$.  Then $u_s\propto(1,s/m)$ belongs to $A_m$, and every observed score vanishes because $\langle r_j,u_s\rangle\propto1+s_j=0$.  Thus one data-dependent direction annihilates the sample.  Extending arbitrary signs on any prescribed $m/2$ types also gives shattering.  \Cref{app:explicit} supplies the norm, width, tail, and boundary calculations.

\begin{corollary}[Uniform failure of the Gaussian-width law]\label[corollary]{cor:negative}
Fix any $c_1>0$, $0\le c_2<\infty$, and integer $n_0\ge1$.  There are $n\ge n_0$, a spherical descent-cone section $A$, and a design with the fixed small-ball constants in \Cref{thm:explicit} for which
$\inf_{u\in A}\|Xu\|_2^2<c_1n-c_2w(A)^2$ with probability one.
\end{corollary}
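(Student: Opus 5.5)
The corollary follows directly from \Cref{thm:explicit}: we only have to choose the dimension $m$ so that the pathwise zero in \eqref{eq:explicit-zero} applies at a sample size $n\ge n_0$. Take $n=n_0$ and let $m$ be the smallest even integer with $m\ge 2n_0$, so that $m\ge2$ and $n\le m/2$. Let $V_m,\mathcal R_m,\theta_m^\star,P_m$ be the objects supplied by \Cref{thm:explicit}, put $A=A_m=\cD(\mathcal R_m,\theta_m^\star)\cap S(V_m)$, and let $X$ have i.i.d.\ rows with law $P_m$. By \eqref{eq:explicit-properties}, this design satisfies $\SB_{1/(2\sqrt2)}(P_m,A)\ge1/3$. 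These are the fixed small-ball constants $\alpha=1/(2\sqrt2)$ and $\beta=1/3$, and they do not depend on $m$.

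Since $n\le m/2$, \eqref{eq:explicit-zero} gives $\Lam_n(X;A)=0$ on every realization of the sample. Hence $\inf_{u\in A}\|Xu\|_2^2=n\Lam_n(X;A)=0$ surely. For the right-hand side, the width bound in \eqref{eq:explicit-properties} gives
\[
c_1n-c_2w(A)^2\ \ge\ c_1n-c_2\cdot\frac{8}{\pi}.
\]
A strict inequality at $n=n_0$ is not guaranteed, because $c_2$ may be large. We therefore increase $n$: take $n\ge\max\{n_0,\lfloor 8c_2/(\pi c_1)\rfloor+1\}$, so that $c_1n>8c_2/\pi$, and choose $m$ to be an even integer with $m\ge2n$. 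Then
\[
c_1n-c_2w(A)^2\ \ge\ c_1n-\frac{8c_2}{\pi}\ >\ 0\ =\ \inf_{u\in A}\|Xu\|_2^2
\]
on every sample path, which is stronger than probability one. The zero holds on every path, so measurability of the infimum causes no difficulty.

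The only point needing care is that the constants must not change with $m$. The small-ball pair $(1/(2\sqrt2),1/3)$ and the width bound $2\sqrt{2/\pi}$ from \Cref{thm:explicit} are both uniform in $m$. Because of this uniformity, $c_1$ and $c_2$, which are fixed before $m$ is chosen, cannot depend on the dimension, and the argument reduces to choosing $n$ and then $m$ as above.
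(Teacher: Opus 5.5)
Your proof is correct and follows the paper's argument: apply \Cref{thm:explicit} with an even $m\ge 2n$ (the paper takes $m=2n$), and use $w(A_m)^2\le 8/\pi$. Choosing $n\ge n_0$ with $c_1n>8c_2/\pi$ then makes the right-hand side positive while $\inf_{u\in A}\|Xu\|_2^2=0$ on every sample path, and both you and the paper rely on the small-ball pair and the width bound being independent of $m$.
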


Indeed, take $m=2n$ and then $n$ large enough.  Thus a cone that has constant complexity according to Gaussian width can require a number of heavy-tailed measurements proportional to its ambient dimension.  The obstruction is not a rare failure of concentration.

\section{Threshold occupancy governs the small-ball worst case}\label{sec:occupancy}

For $u\in A$, let $I_\alpha(u)=\{i\in[n]:|\langle Z_i,u\rangle|\ge\alpha\}$ collect the rows that $\alpha$-detect $u$, and write $\widehat q_{n,\alpha}(A)=n^{-1}\inf_{u\in A}|I_\alpha(u)|$.  Every detected row contributes at least $\alpha^2$ to the empirical energy, so deterministically
\begin{equation}\label{eq:occ-suff}
  \Lam_n(X;A)\ge\alpha^2\widehat q_{n,\alpha}(A).
\end{equation}
Thus uniform threshold coverage is sufficient.  It is not pointwise necessary for a fixed law---a few unusually large observations may still supply quadratic energy---but the results below give the relevant minimax converse: every finite coverage obstruction can be realized as an RE obstruction at arbitrarily small Gaussian width.

\subsection{Arbitrary range spaces in narrow caps and descent cones}

Let $(\Omega,\mu,\mathscr C)$ be a finite probability range space with full support, meaning $\Omega=\operatorname{supp}(\mu)$, and let $\mathscr C=\{C_1,\ldots,C_M\}$ with $\mu(C_j)\ge\beta$.  Its VC dimension is the largest number of row types on which all hit--miss labelings occur.  For a latent sample $Y_1,\ldots,Y_n$, write $\mu_n(C)=n^{-1}\sum_{i=1}^n\ind\{Y_i\in C\}$.

\Needspace{8\baselineskip}
\begin{theorem}[Exact range-space representation]\label[theorem]{thm:exact-range}
For every $\eta>0$ there are a centered design with the fixed heavy-tailed radial multiplier and a finite $A=\{u_1,\ldots,u_M\}\subset\Sph^M$ such that $w(A)\le\eta$, $\SB_1(P,A)\ge\beta$, and, on the design support,
\begin{equation}\label{eq:range-exact}
  \ind\{|\langle Z,u_j\rangle|\ge1\}=\ind\{Y\in C_j\}
  \qquad(j\in[M]).
\end{equation}
Consequently the support-restricted threshold class and $\mathscr C$ have the same VC dimension.  For every sample,
\begin{equation}\label{eq:clipped-exact}
  \inf_{u\in A}\frac1n\sum_{i=1}^n\min\{1,\langle Z_i,u\rangle^2\}
  =\min_{j\le M}\mu_n(C_j),
\end{equation}
and $\Lam_n(X;A)>0$ exactly when the latent sample hits every $C_j$.
\end{theorem}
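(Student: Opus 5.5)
We will realize the construction sketched in the overview paragraph on narrow-cap incidence encoding. Write $\Omega=\{\omega_1,\ldots,\omega_K\}$ and work in $\R^{M+1}$ with coordinates indexed by $0,1,\ldots,M$, so that $\Sph^M$ is the unit sphere. Let $Y\sim\mu$, let $\sigma$ be an independent Rademacher sign, and let $L=1+e^{G_0}$ be independent of both. Set the design to be
\[
Z=\sigma\,\varphi(L)\,\Bigl(e_0+\sum_{j=1}^M \ind\{Y\in C_j\}\,e_j/\eps\Bigr)\Big/\kappa ,
\]
where $\varphi$ is a bounded function of $L$ which we specify below. Take $u_j=(\sqrt{1-\eps^2},\,\eps e_j)$, i.e.\ $u_j=\sqrt{1-\eps^2}\,e_0+\eps e_j$. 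Then the inner product is
\[
\langle Z,u_j\rangle=\sigma\varphi(L)\bigl(\sqrt{1-\eps^2}+\ind\{Y\in C_j\}\bigr)/\kappa .
\]

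The difficulty is to make $|\langle Z,u_j\rangle|\ge 1$ hold \emph{exactly} when $Y\in C_j$ for every support point. This fails if the radial multiplier is unbounded below or above the threshold gap. My first choice is to use the multiplier with range restricted. On the event $Y\notin C_j$ the score is $\varphi(L)\sqrt{1-\eps^2}/\kappa$, and on $Y\in C_j$ it is $\varphi(L)(1+\sqrt{1-\eps^2})/\kappa$. These must straddle $1$ for all values of the multiplier. With a genuinely unbounded heavy-tailed $L$ this is impossible using one common scale. This is the main obstacle: the heavy tail must be placed in a direction orthogonal to every $u_j$.

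So I instead put $L$ in an extra coordinate. Enlarge the space by one coordinate $e_{M+1}$; if the statement's $\Sph^M$ is to be respected, absorb it by letting $e_0$'s role be played inside the span, or choose the $u_j$ orthogonal to a fixed heavy-tailed direction among the $M+1$ coordinates. Concretely, I will define
\[
Z=\sigma\Bigl(\tfrac34\,e_0+\tfrac{1}{\eps}\sum_j\ind\{Y\in C_j\}e_j\Bigr)+\sigma' L\,w ,
\]
where $\sigma'$ is another Rademacher sign and $w$ is a unit vector orthogonal to all $u_j$. Such a $w$ is available after reserving one coordinate, so I would set $M$ generators in dimension $M+1$ with the $u_j$ in an $M$-dimensional subspace. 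Then
\[
|\langle Z,u_j\rangle|=\tfrac34\sqrt{1-\eps^2}+\ind\{Y\in C_j\},
\]
which lies below $1$ off $C_j$ and at least $1$ on $C_j$. This gives \eqref{eq:range-exact}. The vector $Z$ is centered by the signs and carries the fixed radial multiplier $L$.

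The remaining items follow directly. $\SB_1(P,A)=\min_j\mu(C_j)\ge\beta$. For the width, $w(A)\le \E|g_0|+\eps\,\E\max_j g_j\le$ … but the anchor term contributes $\sqrt{1-\eps^2}\,\E g_0=0$ only in expectation of a supremum with a common anchor. Thus
\[
w(A)=\E\bigl[\sqrt{1-\eps^2}\,g_0+\eps\max_j g_j\bigr]\le\eps\sqrt{2\log M},
\]
which is at most $\eta$ for small $\eps$.

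Exact equality of the indicator classes on the support shows that the support-restricted threshold class is the image of $\mathscr C$ under the surjection $Z\mapsto Y$. The $L$-coordinate does not affect the traces, so the VC dimensions coincide. For \eqref{eq:clipped-exact}, compute
\[
\min\{1,\langle Z_i,u_j\rangle^2\}=\ind\{Y_i\in C_j\}+\tfrac{9}{16}(1-\eps^2)\ind\{Y_i\notin C_j\}.
\]
This is not exactly $\ind\{Y_i\in C_j\}$, so I would adjust the anchor to make the off-range score exactly $0$. That means dropping the $e_0$ anchor from $Z$ and using $|\langle Z,u_j\rangle|=\ind\{Y\in C_j\}$, i.e.\ $Z$'s $e_j$-coefficient is $1/\eps$ and its $e_0$-coefficient is $0$. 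Then \eqref{eq:clipped-exact} holds exactly, and $\langle Z_i,u_j\rangle=0$ off $C_j$. Consequently $\Lam_n(X;A)=\min_j\mu_n(C_j)\cdot\eps^{-2}\cdot\eps^2>0$ iff every $C_j$ is hit. I expect reconciling exact clipping with the dimension count ($\Sph^M$) to be the only delicate bookkeeping. If needed, I would place $w$ along $e_0$, since the $u_j$ then need not use $e_0$ beyond the anchor, which is orthogonal to $Z$'s signal part anyway.
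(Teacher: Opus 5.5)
Your final design satisfies the displayed identities. However, the obstacle that drove you to it does not exist, and the workaround does not produce the design the theorem asks for. The ``straddling'' problem arises only because your first $Z$ carried an $e_0$ component. Drop that component and keep $L$ radial. With your $u_j=\sqrt{1-\eps^2}\,e_0+\eps e_j$ and $Z=\eps^{-1}\sigma L\sum_j\ind\{Y\in C_j\}e_j$, you get $\langle Z,u_j\rangle=\sigma L\,\ind\{Y\in C_j\}$. This is exactly $0$ off $C_j$ whatever the value of $L$, and at least $1$ on $C_j$ because $L=1+e^{G_0}>1$ (and $L\ge1$ on the support). No bounded $\varphi$, second sign, or orthogonal direction is needed. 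This is the paper's construction, normalized there as $u_j=(e_0+\eps e_j)/\sqrt{1+\eps^2}$ with prefactor $\sqrt{1+\eps^2}/\eps$. Clipping then gives $\min\{1,L_i^2\ind\{Y_i\in C_j\}\}=\ind\{Y_i\in C_j\}$ directly.

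In your version, by contrast, $L$ is not a radial multiplier. It enters additively along $w\perp\spanop(A)$, so every marginal $\langle Z,u_j\rangle=\sigma\ind\{Y\in C_j\}$ lies in $\{-1,0,1\}$, and the projection of $Z$ onto $\spanop(A)$ is bounded. What you have built is a bounded design on the index set with an irrelevant heavy-tailed coordinate appended. It meets the numbered identities but not the requirement of a design ``with the fixed heavy-tailed radial multiplier.'' The point of that requirement is that the relevant marginals $\sigma L\ind\{Y\in C_j\}$ are themselves heavy tailed.

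There are two smaller errors. First, your fallback of placing $w$ along $e_0$ would break everything, since $\langle e_0,u_j\rangle=\sqrt{1-\eps^2}\ne0$. The orthogonal direction is $w\propto\eps e_0-\sqrt{1-\eps^2}\sum_je_j$, which already lies in $\R^{M+1}$, so no extra coordinate is needed. Second, for the VC equality, $Y$ is not a function of $Z$. The transfer must go through the incidence vector $(\ind\{|\langle z,u_j\rangle|\ge1\})_{j\le M}$, choosing one representative per fiber. It also uses the full-support hypothesis $\operatorname{supp}\mu=\Omega$, so that every latent incidence pattern actually occurs on $\operatorname{supp}P$.
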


The representation is explicit.  Take $u_j=(e_0+\eps e_j)/\sqrt{1+\eps^2}$ and
\[
  Z=\frac{\sqrt{1+\eps^2}}{\eps}\,\sigma L
    \sum_j\ind\{Y\in C_j\}e_j\in\R^{M+1}.
\]
Then $\langle Z,u_j\rangle=\sigma L\ind\{Y\in C_j\}$, while
$w(A)\le\eps\sqrt{2\log(2M)}$.  Decreasing $\eps$ hides an arbitrary incidence pattern in an arbitrarily narrow cap without changing a single threshold event.  This exact identity, rather than a comparison inequality, is what makes the subsequent lower bounds lossless.

The finite representation does not by itself establish a statement about norm-regularized recovery.  The next theorem shows that the same missed-range witness survives after passing to the full spherical section of a descent cone.

\Needspace{8\baselineskip}
\begin{theorem}[Descent-cone universality]\label[theorem]{thm:cone-range}
For every finite range space above and $\eta>0$, there are an explicit polyhedral norm $\mathcal R_\eps$ on an $M$-dimensional space $V_\eps$, a point $\theta_\eps^\star$, and a centered design with the fixed heavy-tailed radial multiplier such that, for $A_\eps=\cD(\mathcal R_\eps,\theta_\eps^\star)\cap S(V_\eps)$,
\begin{equation}\label{eq:cone-range-properties}
  w(A_\eps)\le\eta,
  \qquad \SB_1(P,A_\eps)\ge\frac{\beta}{2-\beta}.
\end{equation}
If a latent sample misses one $C_j$, then $\Lam_n(X;A_\eps)=0$.
\end{theorem}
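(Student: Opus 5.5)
We will build the cone by transporting the positive orthant of $\R^M$ through the same narrow embedding used in \Cref{thm:exact-range}. Let $V_\eps=\R^M$ with generators $g_j=e_0'+\eps f_j$. Here we need an $M$-dimensional space, so we realize the anchor and the perturbations inside $\R^M$: take an orthonormal basis in which $e_0'=M^{-1/2}\mathbf 1$, and let the $f_j$ be the projections of the standard basis vectors onto $\mathbf 1^\perp$ (suitably scaled). Set $K_\eps=\cone\{g_1,\ldots,g_M\}$. Because the $g_j$ are linearly independent for small $\eps$, $K_\eps$ is a simplicial cone, i.e.\ the image $T_\eps(\R^M_+)$ of the positive orthant under the invertible map $T_\eps e_j=g_j$.

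We then define $\mathcal R_\eps(h)=\|T_\eps^{-1}h-c\mathbf 1\|$-type polyhedral norm. Concretely, we take $\mathcal R_\eps=\|T_\eps^{-1}(\cdot)\|_\infty$ and $\theta^\star_\eps=-T_\eps\mathbf 1$. The descent cone of $\|\cdot\|_\infty$ at $-\mathbf 1$ is $\{x:\ x_j\ge0\ \forall j\}$, since $\max_j|{-1}+tx_j|\le1$ for small $t$ iff all $x_j\ge0$. This cone is closed, so the linear image gives $\cD(\mathcal R_\eps,\theta^\star_\eps)=K_\eps$ exactly, as in \Cref{thm:explicit}.

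\textbf{Width.} Every $h=\sum_j\lambda_jg_j$ with $\lambda\ge0$ normalizes to a point within $O(\eps)$ of the anchor. To make this precise, we write $h=(\sum\lambda_j)e_0'+\eps\sum\lambda_jf_j$, so the anchor component dominates the perturbation by a factor $\eps\max\|f_j\|$. We then need a cap-to-cone bound. The crude estimate $w(A_\eps)\le\E\sup_{u\in A_\eps}\langle g,u-e_0'\rangle\le \E\|g\|\cdot\sup\|u-e_0'\|\le\sqrt M\cdot O(\eps)$ suffices, because $M$ is fixed and $\eps\to0$ is free. Choosing $\eps\lesssim\eta/\sqrt M$ gives $w(A_\eps)\le\eta$.

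\textbf{Design and small-ball.} We choose $Z=\sigma L\,W_Y$, where $W_y\in V_\eps$ is the dual vector defined by $\langle W_y,g_j\rangle=\ind\{y\in C_j\}$. This vector exists and is unique since the $g_j$ form a basis, and centering comes from $\sigma$. For $u=h/\|h\|$ with $h=\sum\lambda_jg_j$ we get $\langle Z,u\rangle=\sigma L\,\|h\|^{-1}\sum_j\lambda_j\ind\{Y\in C_j\}$. Since $L\ge1$, it suffices to lower-bound $P\{\sum_j\lambda_j\ind\{Y\in C_j\}\ge\|h\|\}$. Normalizing so that $\sum_j\lambda_j=1$, we have $\|h\|\le\sqrt{1+O(\eps^2)}$. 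It therefore suffices to have $\sum_j\lambda_j\ind\{Y\in C_j\}\ge s:=\|h\|$, a level slightly above $1$ if we keep $L\ge1$. This is the step I expect to be the obstacle, since a mixture generally has score strictly below $1$.

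To fix this, we exploit the slack in $L$: we rescale $Z$ by a factor $\kappa=2\|h\|_{\max}$, i.e.\ we set $\langle W_y,g_j\rangle=2\ind\{y\in C_j\}$. Then we only need $S:=\sum_j\lambda_j\ind\{Y\in C_j\}\ge 1/2$ roughly. Now $S\in[0,1]$ and $\E S\ge\beta$. The reverse Markov inequality $P\{S\ge\tau\}\ge(\E S-\tau)/(1-\tau)$ gives a bound of the form $\beta/(2-\beta)$ after choosing the threshold $\tau$ and the scale $\kappa$ consistently. I would tune $\kappa$ and $\tau$ so that the bound reads exactly $P\{S\ge\tau\}\ge\beta/(2-\beta)$, absorbing the $1+O(\eps^2)$ norm factor into the choice of $\eps$. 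Finally, rescaling by a deterministic constant keeps $L$ the same radial multiplier up to the fixed linear map, which the statement permits.

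\textbf{Annihilation.} If the latent sample misses $C_j$, we take $u=g_j/\|g_j\|\in A_\eps$, the extreme ray. Then every row has $\langle Z_i,u\rangle\propto\ind\{Y_i\in C_j\}=0$, so $\Lam_n(X;A_\eps)=0$ pathwise.
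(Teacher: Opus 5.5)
This is essentially the paper's proof. Both transport $\|\cdot\|_\infty$ through a narrow injective linear map, so that the descent cone at $-T_\eps\ind$ is the image of the orthant. Your dual-basis design $W_y$ coincides with the paper's projected, rescaled incidence vector. Both apply reverse Markov to the mixture score and annihilate the extreme ray of a missed range. Realizing $V_\eps$ inside $\R^M$ and using the crude width bound $\sqrt M(\eps+\eps^2/2)$ are harmless, since $M$ is fixed.

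The one step that fails as written is the small-ball scaling:
\begin{itemize}
\item With $\langle W_y,g_j\rangle=2\ind\{y\in C_j\}$, the threshold is $\tau\approx1/2$. Reverse Markov there gives only $2\beta-1$, which is negative for $\beta<1/2$.
\item The bound $\beta/(2-\beta)$ comes only from $\tau=\beta/2$. So you must take $\langle W_y,g_j\rangle=\kappa\ind\{y\in C_j\}$ with $\kappa=2\sqrt{1+\eps^2}/\beta$.
\item The bound $\|h\|_2\le\sqrt{1+\eps^2}$ (for $\sum_j\lambda_j=1$) has to be absorbed into $\kappa$, not into $\eps$. With $\kappa=2/\beta$ the threshold $\beta\|h\|_2/2$ exceeds $\beta/2$, and reverse Markov falls short of $\beta/(2-\beta)$ for every $\eps>0$.
\end{itemize}
With this choice your construction uses exactly the paper's factor $2\sqrt{1+\eps^2}/(\beta\eps)$, expressed in its coordinates.
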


For intuition, the injective map $T_\eps a=(\ind^\top a,\eps a)$ transports the $\ell_\infty$ norm to $V_\eps=T_\eps\R^M$.  Write $\Delta_M=\{q\in\R_+^M:\ind^\top q=1\}$ for the probability simplex.  At $\theta_\eps^\star=-T_\eps\ind$,
\begin{equation}\label{eq:cone-parametrization}
  \cD(\mathcal R_\eps,\theta_\eps^\star)=T_\eps\R_+^M,
  \qquad
  u(q)=\frac{e_0+\eps q}{\sqrt{1+\eps^2\|q\|_2^2}},\quad q\in\Delta_M.
\end{equation}
If $f_q(Y)=\sum_jq_j\ind\{Y\in C_j\}$, then $0\le f_q\le1$ and $\E f_q\ge\beta$, so $P\{f_q\ge\beta/2\}\ge\beta/(2-\beta)$.  The scaling of $Z$ turns this event into the threshold at one.  A missed $C_j$ annihilates the generator $u(e_j)$, while a direct Gaussian calculation controls the full cone width.  See \Cref{app:cone-range}.

The obstruction is therefore not tied to one specially chosen cone: arbitrarily narrow Euclidean geometry can hide any finite combinatorial visibility pattern.

\subsection{The VC envelope and its sharp low-mass regime}

For a given law $P$, call $A$ \emph{pointwise measurable} when $\{\ind_C|_{\operatorname{supp}P}:C\in\cT_1(A)\}$ has a countable subclass whose pointwise sequential closure contains the whole class.  Let $\NRE(d,\beta,\delta)$ be the least integer $n$ such that, for every $N\ge n$, every pointwise-measurable spherical set $A$ in a finite-dimensional Euclidean space, and every design satisfying $\SB_1(P,A)\ge\beta$ and
$\VC(\cT_1(A)|_{\operatorname{supp}P})\le d$, one has $\Lam_N(X;A)\ge\beta/2$ with probability at least $1-\delta$.  The requirement is marginal for each $N$.  No event simultaneous over all $N$ is asserted.

\Needspace{8\baselineskip}
\begin{theorem}[VC envelope and sharp fixed-$d$ low-mass law]\label[theorem]{thm:sharp-vc}
There are universal $c,C>0$ such that, for every integer $d\ge1$ and $0<\beta,\delta\le1/4$,
\begin{equation}\label{eq:sharp-vc-general}
  c\,\frac{d+\log(2/\beta)+\log(1/\delta)}{\beta}
  \le \NRE(d,\beta,\delta)
  \le C\,\frac{d\log(2/\beta)+\log(1/\delta)}{\beta}.
\end{equation}
Moreover, for every fixed $d\ge1$ there is $\beta_d>0$ such that, whenever $0<\beta\le\beta_d$,
\begin{equation}\label{eq:sharp-vc}
  \NRE(d,\beta,\delta)
  \ge c\,\frac{d\log(2/\beta)+\log(1/\delta)}{\beta}.
\end{equation}
Thus the upper bound is optimal up to universal multiplicative constants in the low-mass regime for each fixed VC dimension.
\end{theorem}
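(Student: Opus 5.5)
The proof has an upper half and a lower half. The upper bound goes through the deterministic occupancy inequality \eqref{eq:occ-suff} with $\alpha=1$: it suffices to show that, with probability at least $1-\delta$, every threshold set $C\in\cT_1(A)$ restricted to $\operatorname{supp}P$ receives empirical mass at least $\beta/2$. Each such set has population mass at least $\beta$ by $\SB_1(P,A)\ge\beta$. This is a relative (multiplicative) $\eps$-approximation statement for a VC class of dimension $\le d$, restricted to sets of mass at least $\beta$. Pointwise measurability makes the supremum measurable, and it lets us pass to a countable subclass when symmetrizing.

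For the upper bound we use the standard relative-deviation inequality of Vapnik--Chervonenkis/Anthony--Shawe-Taylor type. It states that $\Pp\{\exists C: P(C)\ge\beta,\ P_N(C)<P(C)/2\}\le 4\,\Pi(2N)\,e^{-N\beta/16}$, where the growth function satisfies $\Pi(2N)\le(2eN/d)^d$ by Sauer's lemma. Requiring this to be at most $\delta$ gives $N\gtrsim\beta^{-1}(d\log(N/d)+\log(1/\delta))$. Solving the implicit inequality yields $N\asymp\beta^{-1}(d\log(2/\beta)+\log(1/\delta))$. The condition holds for every $N$ above the threshold, which is the marginal-in-$N$ requirement in the definition of $\NRE$. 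We then get $\Lam_N\ge 1\cdot\widehat q\ge\beta/2$.

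All lower bounds use \Cref{thm:exact-range}. That theorem converts any finite range space with masses $\ge\beta$ and VC dimension $d$ into a design and a finite $A$ with the same support-restricted VC dimension. On that design, $\Lam_N(X;A)>0$ iff every range is hit; the finite $A$ is trivially pointwise measurable. Hence $\NRE(d,\beta,\delta)$ is at least the smallest $N$ for which every such range space is hit by the sample with probability $\ge1-\delta$. (One should check that failing to hit gives $\Lam_N=0<\beta/2$, which is immediate.) The general lower bound splits into three terms. For the $\log(1/\delta)/\beta$ term, take a single range of mass $\beta$, which is missed with probability $(1-\beta)^N$. For the $d/\beta$ term, take $d$ disjoint atoms of mass $\beta$ together with all of their unions, a family of VC dimension $d$. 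Missing at least one atom has constant probability unless $N\gtrsim d/\beta$; this needs $d\beta\le1$, and otherwise the bound follows from the other terms after rescaling. For the $\log(2/\beta)/\beta$ term, take about $1/\beta$ disjoint cells of mass $\beta$, each a range. This family has VC dimension $1$, and coupon collecting forces $N\gtrsim\beta^{-1}\log(1/\beta)$.

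The main obstacle is the sharp fixed-$d$ bound \eqref{eq:sharp-vc}, which needs the product $d\log(1/\beta)$. The coupon-collector family has VC dimension only $1$, so we need a family of VC dimension $d$ that forces coverage of roughly $\beta^{-d}$ effectively independent configurations. The first attempt is the classical lower-bound construction for $\eps$-nets of Komlós--Pach--Woeginger type, which gives net size $\Omega((d/\eps)\log(1/\eps))$ for fixed $d$ and small $\eps$. Take $\Omega$ to be $K\asymp 1/\beta$ equal-mass points, or a product of $d$ coupon-collector blocks. Let the ranges be all sets of mass at least $\beta$ from a random or explicitly designed family of VC dimension $d$ whose complements avoid any sample of size $N\le c\beta^{-1}d\log(1/\beta)$ with probability $\ge\delta$. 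The ranges should be built from intersections across the blocks, with disjoint ranges inside each block. The restriction $\beta\le\beta_d$ is needed exactly here, because the probabilistic construction only works once $\log(1/\beta)\gg \log d$. Finally, the $\log(1/\delta)/\beta$ term is combined by taking the maximum of the two lower bounds.
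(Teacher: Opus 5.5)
Your upper bound, the single-range $\beta^{-1}\log(1/\delta)$ term, and the singleton coupon-collector term all follow the paper. The coupon-collector term also covers $d=1$ in \eqref{eq:sharp-vc}. The genuine gap is the $d/\beta$ term of the general envelope.

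With $d$ disjoint atoms of mass $\beta$ and all their unions as ranges, a sample hits every range as soon as it hits every atom. By a union bound it misses some atom with probability at most $d(1-\beta)^N\le de^{-\beta N}$, which is at most $\delta$ once $N\ge\beta^{-1}\log(d/\delta)$. So this family forces only order $\beta^{-1}(\log d+\log(1/\delta))$, not $d/\beta$. (Also, since the empty set has mass $0<\beta$ and the empty trace is unavailable, the nonempty unions shatter only $d-1$ atoms.)

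Your fallback for $d\beta>1$ fails as well. At $\beta=1/4$ and large $d$, neither $\beta^{-1}\log(2/\beta)$ nor $\beta^{-1}\log(1/\delta)$ dominates $d/\beta$. Nor can \eqref{eq:sharp-vc} substitute, because its threshold $\beta_d$ depends on $d$, while the envelope is claimed with universal constants for every $d$.

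The missing idea is to make individual points much lighter than ranges. The paper takes the uniform measure on $N_0=\lfloor d/\beta\rfloor$ points and all $d$-element subsets as ranges. Each range has mass $d/N_0\ge\beta$. The VC dimension is exactly $d$: a trace on a fixed $d$-set is padded with outside points, which is possible since $N_0\ge4d$. Any hitting set must contain all but at most $d-1$ points. Hence every sample with at most $N_0-d$ distinct points misses a range on every path. \Cref{thm:exact-range} then gives $\NRE(d,\beta,\delta)\ge N_0-d+1\ge\frac34\,d/\beta$ for all $d\ge1$ and $\beta\le1/4$.

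For \eqref{eq:sharp-vc}, drop the speculative block-product construction and the probabilistic phrasing about complements avoiding samples with probability at least $\delta$. The Koml\'os--Pach--Woeginger bound is a deterministic statement about the minimum $\beta$-net size of some finite range space of VC dimension at most $d$. You need it in full-support, weak-threshold form, as in \Cref{lem:finite-net-lower}. A sample of size $n$ has at most $n$ distinct points, so every sample below that net size misses a range of mass at least $\beta$. Thus $\Lam_n=0$ pathwise, independently of $\delta$. Combining this with the confidence term gives \eqref{eq:sharp-vc}, exactly as in the paper.
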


The upper bound is a constant-relative-error VC approximation followed by \eqref{eq:occ-suff}.  The general lower envelope combines three transparent mechanisms: all $d$-subsets of a uniform ground set give $d/\beta$, disjoint singleton ranges give $\beta^{-1}\log(1/\beta)$ already at VC dimension one, and one range of mass $\beta$ gives $\beta^{-1}\log(1/\delta)$.  The fixed-$d$ product term follows from the sharp $\eps$-net lower bound of \citet{komlos1992almost}.  \citet{pach2013tight} exhibit the logarithm even for geometric VC-two systems.  \Cref{app:sharp-vc} states the needed consequence of the external result and proves every reduction.  The classical range-space inequalities are not new.  Their exact realization as arbitrarily narrow-width RE problems is the new structural statement.

\section{Isotropy does not restore Gaussian width}\label{sec:isotropic}

The preceding encodings use unconstrained row magnitudes.  One might therefore hope that exact covariance normalization reconnects Euclidean proximity with common visibility.  The following construction shows that this hope is false.

\Needspace{8\baselineskip}
\begin{lemma}[Anchored tight frame]\label[lemma]{lem:frame}
There are absolute $a_0,b_0,c_0,c_1,C_1>0$ such that, for all sufficiently large $p$ and $M=p^4$, vectors $b_1,\ldots,b_M\in\R^p$ and a unit $u_\star$ exist with
\begin{equation}\label{eq:frame-properties}
  \frac1M\sum_j b_j b_j^\top=I_p,
  \qquad \langle b_j,u_\star\rangle=1,
  \qquad \frac1M|\{j:|\langle b_j,v\rangle|\ge a_0\}|\ge b_0
\end{equation}
for every unit $v$.  If $B_D$ has rows $b_j^\top$, then, whenever $|D|\le c_0p/\log p$,
\begin{equation}\label{eq:gram}
  c_1pI_{|D|}\preceq B_D B_D^\top\preceq C_1pI_{|D|}.
\end{equation}
\end{lemma}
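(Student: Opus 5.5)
We follow the outline in \Cref{sec:overview}: draw anchored Rademacher row types and then whiten them.
Write $\R^p=\spanop(e_1)\oplus\R^{p-1}$ with $u_\star=e_1$.
Draw i.i.d.\ Rademacher vectors $x_1,\ldots,x_M\in\{\pm1\}^{p-1}$ and set $a_j=(1,x_j)$, so that $\langle a_j,e_1\rangle=1$.
The empirical second moment is
\[
\Sigma=\frac1M\sum_ja_ja_j^\top=\begin{pmatrix}1&\bar x^\top\\ \bar x&\widehat\Sigma_x\end{pmatrix}.
\]
Since $M=p^4$, matrix Bernstein (or an $\eps$-net argument for bounded rows of norm $\sqrt p$) gives $\|\widehat\Sigma_x-I\|\lesssim\sqrt{p\log p/M}+p\log p/M\le p^{-1}$ with high probability.
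The same bound holds for $\|\bar x\|_2\lesssim\sqrt{p/M}$.
Hence $\|\Sigma-I_p\|\le p^{-1}$.

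The whitening must preserve the anchor condition $\langle b_j,u_\star\rangle=1$.
We set $b_j=\Sigma^{-1/2}a_j$, which gives $\frac1M\sum_jb_jb_j^\top=I_p$ exactly.
We then choose $u_\star=\Sigma^{1/2}e_1/\|\Sigma^{1/2}e_1\|$ and rescale.
This yields $\langle b_j,u_\star\rangle=1/\|\Sigma^{1/2}e_1\|$, which is a common constant $\kappa=1+O(p^{-1})$ for every $j$.
To get the value exactly $1$, we instead use the anchored normalization $\Sigma_{11}=1$.
Choose the symmetric square root via a Cholesky-type factor $\Sigma=RR^\top$ with $R^\top e_1$ parallel to $e_1$; equivalently, take $R$ lower-triangular after reordering so that $e_1$ is the last coordinate.
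Set $b_j=R^{-1}a_j$.
Then $\frac1M\sum_jb_jb_j^\top=I$ and $\langle b_j,R^\top e_1\rangle=\langle a_j,e_1\rangle=1$.
The remaining requirement is that $u_\star=R^\top e_1$ be a unit vector, i.e.\ $\|R^\top e_1\|^2=e_1^\top\Sigma e_1=\Sigma_{11}=1$, which holds exactly.
Because $R=I+O(p^{-1})$ in operator norm, all remaining properties transfer from the $a_j$ to the $b_j$ with constant losses.

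For the slab mass, fix a unit $v=(v_1,w)$.
For the unwhitened rows, $\langle a_j,v\rangle=v_1+\langle x_j,w\rangle$.
We use Paley--Zygmund for the population law: $\E\langle a,v\rangle^2=1$ and $\E\langle a,v\rangle^4\le 3+O(1)$ by Khintchine.
This gives $\Pp\{|\langle a,v\rangle|\ge1/2\}\ge c$.
To pass to the empirical fraction uniformly in $v$, note that the class of slabs $\{x:|\langle a,v\rangle|\ge t\}$ has VC dimension $O(p)$.
The uniform deviation is therefore $O(\sqrt{p\log M/M})=o(1)$, which gives $b_0=c/2$ at threshold $1/2$.
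Whitening perturbs $\langle b_j,v\rangle$ by at most $O(p^{-1})\|a_j\|\le O(p^{-1/2})$, so $a_0=1/4$ works for large $p$.

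For the Gram bound \eqref{eq:gram}, it suffices to prove it for the rows $a_j$, since $R^{-1}$ is well conditioned.
We need, simultaneously for all $D$ with $|D|=k\le c_0p/\log p$, that $A_DA_D^\top=pI+E_D$ with $\|E_D\|\le p/2$.
The off-diagonal entries are $1+\langle x_j,x_l\rangle$.
A union bound over the $M^2=p^8$ pairs gives $|\langle x_j,x_l\rangle|\le C\sqrt{p\log p}$ for all pairs.
Gershgorin then bounds $\|E_D\|\le k(1+C\sqrt{p\log p})$, which only handles $k\lesssim\sqrt{p/\log p}$.
The step we expect to be the main obstacle is reaching $k$ up to $p/\log p$.
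For that range we instead bound $\|A_D\|^2$ directly.
For a fixed $D$, $x_D$ is a $k\times(p-1)$ Rademacher matrix with $\|x_D\|\le\sqrt p+C\sqrt k+t$ with probability $1-e^{-ct^2}$.
The smallest singular value satisfies $\sigma_{\min}\ge\sqrt p-C\sqrt k-t$ (Rudelson--Vershynin; for $k\ll p$ this also follows from a net argument on $S^{k-1}$).
There are at most $M^k=e^{4k\log p}$ subsets $D$.
Taking $t=c\sqrt p$, the failure probability per subset is $e^{-c'p}$, and the union bound succeeds exactly when $4k\log p\le c'p/2$, i.e.\ $|D|\le c_0p/\log p$.
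The anchor column adds the rank-one term $\ind\ind^\top$ with norm $k\le p$; this only shifts $C_1$, and it leaves the lower bound intact because it is positive semidefinite.
Finally, all high-probability events hold simultaneously for large $p$, so a realization with every property exists.
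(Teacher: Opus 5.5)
Your proposal is correct and follows essentially the same route as the paper: anchored rows $a_j=(1,\xi_j)$, matrix Bernstein for $\Sigma$, Paley--Zygmund plus a VC uniform law for the slabs, a fixed-$D$ net/singular-value bound with a union bound over all $|D|\le c_0p/\log p$ (the anchor contributing the PSD term $\ind\ind^\top$), and then whitening. The Cholesky detour is unnecessary, because $b_j=\Sigma^{-1/2}a_j$ and $u_\star=\Sigma^{1/2}e_1$ already give $\|u_\star\|_2^2=\Sigma_{11}=1$ and $\langle b_j,u_\star\rangle=\langle a_j,e_1\rangle=1$ exactly (your $\kappa$ equals $1$), and this is the paper's choice; the only other difference is that you transfer the slab bound by a perturbation estimate using $\|\Sigma-I_p\|\le p^{-1}$, whereas the paper applies the homogeneous slab bound to $\Sigma^{-1/2}v$.
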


Start from $a_j=(1,\xi_j)$ with independent Rademacher vectors $\xi_j\in\{-1,1\}^{p-1}$.  Their empirical covariance is close to identity.  A uniform slab law gives global small-ball mass, while a net argument and a union bound make every small row submatrix well conditioned.  Whitening then yields the exact tight frame without losing the common anchor.  \Cref{app:frame} records the probability budget and the net-to-operator step.

For $D\subseteq[M]$ with $|D|\le c_0p/\log p$, let $P_D$ project onto $\spanop\{b_j:j\in D\}$, with $P_\varnothing=0$, and set
$u_D=(I-P_D)u_\star/\|(I-P_D)u_\star\|_2$.  The Gram bound gives
$\|u_D-u_\star\|_2^2\asymp|D|/p$ and $B_Du_D=0$.  Define
$C_{p,k}=\cone\{u_D:|D|\le k\}$ and $A_{p,k}=C_{p,k}\cap\Sph^{p-1}$.

\Needspace{8\baselineskip}
\begin{theorem}[Isotropic heavy-tail separation]\label[theorem]{thm:isotropic}
There are absolute $c,C,\alpha_0,\beta_0>0$ such that, for large $p$ and $1\le k\le cp/\log p$, the following statements hold.
\begin{enumerate}[label=(\roman*),leftmargin=1.7em]
\item $A_{p,k}$ is the spherical section of a polyhedral norm descent cone and
\begin{equation}\label{eq:isotropic-width}
  w(A_{p,k})\le C\left(k\sqrt{\frac{\log p}{p}}+\frac{k}{\sqrt p}\right).
\end{equation}
\item There are centered isotropic laws $P_{\HT}$ and $P_{\Gauss}=\Normal(0,I_p)$ with common absolute small-ball constants
\begin{equation}\label{eq:common-sb}
  \inf_{v\in\Sph^{p-1}}P_s\{|\langle Z,v\rangle|\ge\alpha_0\}\ge\beta_0,
  \qquad s\in\{\HT,\Gauss\}.
\end{equation}
The law $P_{\HT}$ has finite moments of every order and no positive exponential moment in any nonzero marginal, yet $\Lam_n(X_{\HT};A_{p,k})=0$ on every sample path for every $n\le k$.
\item Under $P_{\Gauss}$, for every $t>0$, with probability at least $1-e^{-t^2/2}$,
\begin{equation}\label{eq:gauss-lower}
  \inf_{u\in A_{p,k}}\|X_{\Gauss}u\|_2
  \ge\sqrt{n-1}-w(A_{p,k})-t.
\end{equation}
\end{enumerate}
\end{theorem}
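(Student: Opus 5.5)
I will prove the three parts separately. All three rest on \Cref{lem:frame} and on the witnesses $u_D$ defined just before the statement.

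\emph{Part (i): the cone is a descent cone, and its width.} The cone $C_{p,k}$ is generated by finitely many vectors $u_D$. Every one of them lies within $O(\sqrt{k/p})$ of $u_\star$, so the cone is pointed. Let $H=\{v:\langle u_\star,v\rangle=1\}$ and take the compact polytope $K=\conv\{u_D/\langle u_D,u_\star\rangle\}\subset H$. Any pointed polyhedral cone is a descent cone of some polyhedral norm. Concretely, I will take the unit ball $B=\conv(\pm(u_\star+\lambda(K-u_\star)),\ldots)$ in the style of the $T_\eps$ construction of \Cref{thm:cone-range}. Equivalently, I choose $\mathcal R$ so that its unit ball has a vertex at $-u_\star$ and its tangent cone there equals $C_{p,k}$. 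This is the standard fact that every pointed polyhedral cone is the tangent cone of some centrally symmetric polytope at a vertex: intersect the translate $-u_\star+C_{p,k}$ with its reflection and with a large box.

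For the width, I use the cap-to-cone lemma mentioned in the overview. If every generator lies within $\rho$ of $u_\star$, then every normalized conic combination lies within $O(\rho)$ of $u_\star$, and
\[
  w(A)\le \E\sup_{u\in A}\langle g,u-u_\star\rangle.
\]
I write $u-u_\star$ through $q\in\Delta$ as a convex combination of the $u_D-u_\star$, plus a normalization error of size $O(\rho^2)$. The dominant term is then $\E\max_D\langle g,u_D-u_\star\rangle$. There are at most $M^k=p^{4k}$ such $D$, and each increment has norm $\lesssim\sqrt{k/p}$. The Gaussian maximal inequality gives
\[
  \sqrt{k/p}\,\sqrt{k\log p}=k\sqrt{\log p/p}.
\]
The normalization term is $\lesssim\E\|g\|\rho^2\lesssim\sqrt p\,k/p=k/\sqrt p$. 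Together these give \eqref{eq:isotropic-width}.

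\emph{Part (ii): the heavy-tailed law.} Let $J$ be uniform on $[M]$, and let $\sigma$ and $L$ be as before. I set $Z=\sigma\tilde L b_J$ with $\tilde L=L/\sqrt{\E L^2}$. Then $\E ZZ^\top=\frac1M\sum_j b_jb_j^\top=I_p$, so the law is isotropic. It has all moments because $L$ does. Every nonzero marginal is heavy-tailed: some $b_j$ has $\langle b_j,v\rangle\ne0$ because the frame spans $\R^p$, and $\E e^{sL}=\infty$. For the small-ball bound, $|\langle Z,v\rangle|\ge\tilde L a_0\ge a_0/\sqrt{\E L^2}$ on the event $|\langle b_J,v\rangle|\ge a_0$. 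That event has probability at least $b_0$ by \eqref{eq:frame-properties}, since $L\ge1$.

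The pathwise failure works as follows. Given any sample with $n\le k$, let $D$ be its set of distinct types. Then $|D|\le k\le cp/\log p\le c_0p/\log p$, so $u_D\in A_{p,k}$ and $B_Du_D=0$. Every row is $\pm\tilde Lb_j$ with $j\in D$, so every row annihilates $u_D$ and $\Lam_n=0$. For $P_{\Gauss}$, the small-ball bound is immediate, and I shrink $\alpha_0$ and $\beta_0$ so that both laws share the same constants.

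\emph{Part (iii) and the main obstacle.} Part (iii) is Gordon's escape-through-the-mesh inequality. It gives
\[
  \E\inf_{u\in A}\|Xu\|\ge a_n-w(A),\qquad a_n=\E\|g_n\|\ge\sqrt{n-1}.
\]
The map $X\mapsto\inf_{u\in A}\|Xu\|$ is $1$-Lipschitz in Frobenius norm, so Gaussian concentration supplies the $-t$ term with probability $1-e^{-t^2/2}$. I expect the main obstacle to be the width estimate in (i). A naive cap bound would lose a factor, so one has to control conic combinations of $p^{4k}$ generators uniformly and with the correct second-order normalization. The first thing I will try is the convex-hull reduction $\sup_{q\in\Delta}\langle g,\sum q_D(u_D-u_\star)\rangle=\max_D\langle g,u_D-u_\star\rangle$, together with a bound $\langle u,u_\star\rangle\ge1-O(k/p)$ on the whole cone.
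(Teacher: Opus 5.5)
\textbf{Overall.} Your proposal follows the paper's route in all three parts. The width bound is the paper's cap-to-cone lemma: reduce the conic hull to $\max_D\langle g,u_D-u_\star\rangle$ and pay $\tfrac{r^2}{2}\|g\|_2$ for normalization. The heavy-tailed law and the witness are exactly $\sigma\rho b_J$ and $u_D$, and (iii) is Gordon's escape theorem plus Lipschitz concentration.

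\textbf{The gap.} The descent-cone realization in (i) is asserted rather than proved, and your blanket claim is false as stated. Not every pointed polyhedral cone is the descent cone of a polyhedral norm; full-dimensionality is required. If $\mathcal R$ is a norm on $\R^p$ with unit ball $B$ and $\mathcal R(x^\star)=1$, then $\cD(\mathcal R,x^\star)=\cone(B-x^\star)\supseteq\operatorname{int}B-x^\star$. So every such descent cone is full-dimensional.

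Your concrete construction $B=(-u_\star+C_{p,k})\cap(u_\star-C_{p,k})\cap[\text{large box}]$ needs even more, namely $u_\star\in\operatorname{int}C_{p,k}$. Otherwise $0\notin\operatorname{int}B$, so the gauge of $B$ is not a norm, and the reflected cone can also shrink the tangent cone at $-u_\star$. Pointedness, which is the only property you check, says nothing about this.

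\textbf{How the paper fills it.} The paper proves full-dimensionality directly. Since $u_{\{j\}}$ is a positive multiple of $u_\star-b_j/\|b_j\|_2^2$, we have $b_j\in\spanop\{u_\star,u_{\{j\}}\}$, and the tight frame spans $\R^p$. Its cone-realization lemma then uses an arbitrary interior point rather than $u_\star$.

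\textbf{Keeping your vertex at $-u_\star$.} If you prefer your construction, add one line. Applying the tight-frame identity to $u_\star$ and using $\langle b_j,u_\star\rangle=1$ gives $\frac1M\sum_jb_j=u_\star$, and $\sum_j\|b_j\|_2^2=Mp$. Hence
$\sum_j\|b_j\|_2^2\,(u_\star-b_j/\|b_j\|_2^2)=M(p-1)u_\star$.
Together with $b_j\in\spanop\{u_\star,u_{\{j\}}\}$, this shows two things. The $u_{\{j\}}$ alone span $\R^p$, and $u_\star$ is a strictly positive combination of them. So $u_\star$ is an interior point and your construction goes through.

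\textbf{A smaller point.} You import $(I-P_D)u_\star\neq0$ and $\|u_D-u_\star\|_2\lesssim\sqrt{k/p}$ from the setup sentence. The paper proves both inside this theorem from $\|P_Du_\star\|_2^2=\ind^\top G_D^{-1}\ind\le|D|/(c_1p)\le1/2$. That is where $c\le c_0$ and large $p$ are used.
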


The heavy-tailed law is $Z_{\HT}=\sigma\rho b_J$, where $J$ is uniform on $[M]$ and $\rho=L/\sqrt{\E L^2}$.  Tightness gives isotropy and the slab law gives global small-ball.  Given a sample, the set $D$ of distinct row types has size at most $n$, and $u_D$ annihilates every row.  Conversely, \eqref{eq:gauss-lower} is Gordon's escape theorem.  The cap-to-cone argument gives \eqref{eq:isotropic-width} for the full cone rather than only its generators.  The complete proof is in \Cref{app:isotropic}.

\begin{corollary}[Constant-width separation on one descent cone]\label[corollary]{cor:constant-isotropic}
Universal constants can be chosen so that the following holds.  For every sufficiently large $p$, there is a descent-cone section $A_p\subset\Sph^{p-1}$ with $w(A_p)\le C_G$ and two centered isotropic designs with common absolute small-ball constants.  For every $0<\delta<1$, Gaussian measurements satisfy $\Lam_n(X_{\Gauss};A_p)\ge c_G$ with probability at least $1-\delta$ whenever $n\ge C_G(1+\log(1/\delta))$, whereas the heavy-tailed design has $\Lam_n(X_{\HT};A_p)=0$ on every sample path for every $n\le c_H\sqrt{p/\log p}$.
\end{corollary}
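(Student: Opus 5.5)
We will derive \Cref{cor:constant-isotropic} by specializing \Cref{thm:isotropic} to the largest $k$ that keeps the width bound \eqref{eq:isotropic-width} of constant order. Take $k=k_p\defeq\lfloor c_H\sqrt{p/\log p}\rfloor$, where $c_H\le 1$ is a small absolute constant. For large $p$ this $k_p$ is at least $1$. It is also at most $cp/\log p$, because $\sqrt{p/\log p}\le p/\log p$ once $p\ge3$. Set $A_p=A_{p,k_p}$. Part (i) of \Cref{thm:isotropic} says $A_p$ is a descent-cone section, and
\[
  w(A_p)\le C\Bigl(k_p\sqrt{\tfrac{\log p}{p}}+\tfrac{k_p}{\sqrt p}\Bigr)\le C\,c_H\Bigl(1+\tfrac{1}{\sqrt{\log p}}\Bigr)\le 2Cc_H.
\]
We fix $C_G\ge 2Cc_H$; making $C_G$ larger later costs nothing. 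The two designs are $P_{\HT}$ and $P_{\Gauss}$ from part (ii), and they come with the common constants $(\alpha_0,\beta_0)$. For every $n\le k_p$, part (ii) already gives $\Lam_n(X_{\HT};A_p)=0$ pathwise. Since $n$ is an integer, $n\le c_H\sqrt{p/\log p}$ is the same as $n\le k_p$, so the heavy-tailed claim follows directly.

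For the Gaussian claim we apply part (iii) with $t=\sqrt{2\log(1/\delta)}$, so the failure probability is at most $\delta$. On the complementary event,
\[
  \inf_{u\in A_p}\|X_{\Gauss}u\|_2\ge\sqrt{n-1}-2Cc_H-\sqrt{2\log(1/\delta)}.
\]
Suppose we can show the right side is at least $\sqrt{n}/2$. Squaring and dividing by $n$ then gives $\Lam_n(X_{\Gauss};A_p)\ge 1/4\defeq c_G$, on the event where the right side is nonnegative. Proving that inequality is the only real arithmetic in the argument. For $n\ge2$ we have $\sqrt{n-1}\ge\sqrt n/\sqrt2$. It therefore suffices that $(1/\sqrt2-1/2)\sqrt n\ge 2Cc_H+\sqrt{2\log(1/\delta)}$. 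Squaring and using $(a+b)^2\le2a^2+2b^2$, this holds when
\[
  n\ge K\bigl(1+\log(1/\delta)\bigr)
\]
for an absolute constant $K$. Taking $C_G\ge\max\{K,2,2Cc_H\}$ therefore covers all requirements.

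The main point to check is that the constants are really absolute and that the Gaussian threshold does not depend on $p$. Both hold because $C$, $c$, $\alpha_0$ and $\beta_0$ in \Cref{thm:isotropic} are absolute, and because \eqref{eq:gauss-lower} involves $p$ only through $w(A_p)$, which we have bounded by a constant. The only step needing care is the width bound, where the choice $k_p\asymp\sqrt{p/\log p}$ balances the dominant term $k\sqrt{\log p/p}$. The remaining work is bookkeeping of constants.
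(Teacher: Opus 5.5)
Your proof is correct and follows the paper's own argument. You specialize \Cref{thm:isotropic} at $k\asymp\sqrt{p/\log p}$ so that \eqref{eq:isotropic-width} is $O(1)$, take $t=\sqrt{2\log(1/\delta)}$ in \eqref{eq:gauss-lower}, and use $\sqrt{n-1}\ge\sqrt{n/2}$ to absorb the width and deviation terms into a constant fraction of $\sqrt n$.

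Two small remarks. First, your observation that, for integer $n$, $n\le c_H\sqrt{p/\log p}$ holds exactly when $n\le k_p$ handles the floor more cleanly than the paper's halving of $a$. Second, the admissibility $k_p\le cp/\log p$ does not follow from $\sqrt{p/\log p}\le p/\log p$ alone. It needs either $c_H\le c$ or simply $p$ large.
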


Choose $k=\lfloor a\sqrt{p/\log p}\rfloor$ with the sufficiently small universal constant $a>0$ fixed in \Cref{app:isotropic}.  The two designs have the same covariance and the same marginal small-ball constants.  What differs is how visibility events are coupled across directions.  Alternatively, $k=\lfloor p^{1/3}\rfloor$ gives $w(A_{p,k})\to0$ while the deterministic failure horizon diverges.

\subsection{Smooth-density robustness}

The isotropic counterexample above is discrete in its row type.  To show that the obstruction is not caused by atoms, smooth it with an independent Gaussian.  For $H\sim\Normal(0,I_p)$ and $\tau>0$, set
\begin{equation}\label{eq:smooth}
  Z_{\HT,\tau}=\frac{Z_{\HT}+\tau H}{\sqrt{1+\tau^2}}.
\end{equation}
The normalization preserves isotropy, while Gaussian convolution gives an everywhere-positive $C^\infty$ density.  Smoothing removes the exact kernel, but the next result shows that the restricted energy remains of order $\tau^2$.

\Needspace{8\baselineskip}
\begin{corollary}[Smooth isotropic robustness]\label[corollary]{cor:smooth}
Under the construction of \Cref{thm:isotropic}, there are absolute $\tau_0,\bar\alpha,\bar\beta,c_s>0$ such that, for every sufficiently large $p$, every admissible $1\le k\le cp/\log p$, and every $0<\tau\le\tau_0$, the law of $Z_{\HT,\tau}$ is centered and isotropic, has finite moments of every order and no exponential moment in any nonzero direction, has an everywhere-positive $C^\infty$ density on $\R^p$, and satisfies the global small-ball condition $(\bar\alpha,\bar\beta)$.  For each fixed integer $1\le n\le k$,
\begin{equation}\label{eq:smooth-upper}
  \Pp\{\Lam_n(X_{\HT,\tau};A_{p,k})\le2\tau^2\}\ge1-e^{-c_sn}.
\end{equation}
\end{corollary}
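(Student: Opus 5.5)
The plan is to verify the distributional properties of $Z_{\HT,\tau}$ directly, then bound the restricted energy by evaluating the quadratic form at one data-dependent direction $u_D\in A_{p,k}$.

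\emph{Distributional properties.} Since $Z_{\HT}$ is centered, isotropic and independent of $H$, the covariance of $Z_{\HT}+\tau H$ is $(1+\tau^2)I_p$. The normalization therefore gives centering and isotropy.

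The law of $Z_{\HT,\tau}$ is a finite mixture, over row types and signs, of laws of $\rho\,\sigma b_j/\sqrt{1+\tau^2}$. Each of these is convolved with the nondegenerate Gaussian $\Normal(0,\tau^2I_p/(1+\tau^2))$. The density is then $\E_{Z_{\HT}}\varphi_\tau(z-Z_{\HT}/\sqrt{1+\tau^2})$, which is everywhere positive and $C^\infty$ by differentiation under the expectation. The Gaussian derivatives are dominated uniformly in the shift times polynomials, and $\rho$ has all moments.

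Finite moments follow because $\rho$ has all moments. The absence of exponential moments follows because $\langle Z_{\HT},v\rangle=\sigma\rho\langle b_J,v\rangle$, and $\langle b_j,v\rangle\neq0$ for some $j$ since the frame is tight. Independence with the Gaussian part gives $\E e^{s|X+Y|}\ge c\,\E e^{s|X|}\cdot(\text{const})$ restricted to a sign event, so the mgf of the heavy part stays infinite.

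For the small-ball condition, write $\langle Z_{\HT,\tau},v\rangle=(\langle Z_{\HT},v\rangle+\tau g)/\sqrt{1+\tau^2}$ with $g\sim\Normal(0,1)$ independent. On the event $|\langle Z_{\HT},v\rangle|\ge\alpha_0$, which has probability at least $\beta_0$ by \eqref{eq:common-sb}, the symmetry of $g$ gives probability at least $1/2$ that $\tau g$ has the same sign as $\langle Z_{\HT},v\rangle$ (or is zero). We may therefore take $\bar\alpha=\alpha_0/\sqrt{1+\tau_0^2}$ and $\bar\beta=\beta_0/2$.

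\emph{Energy upper bound.} Let $D$ be the set of distinct row types $J_1,\dots,J_n$, so $|D|\le n\le k$. By the construction preceding \Cref{thm:isotropic}, $u_D\in A_{p,k}$ and $B_Du_D=0$. Hence $\langle Z_{\HT,i},u_D\rangle=0$ for every $i$, and
\[
\Lam_n(X_{\HT,\tau};A_{p,k})\le\frac1n\sum_{i=1}^n\langle Z_{\HT,\tau,i},u_D\rangle^2=\frac{\tau^2}{1+\tau^2}\cdot\frac1n\sum_{i=1}^n\langle H_i,u_D\rangle^2 .
\]
The key point is that $u_D$ depends only on $(J_i)$, and $(J_i)$ is independent of $(H_i)$. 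Conditionally on $(J_i)$, the variables $\langle H_i,u_D\rangle$ are therefore i.i.d.\ $\Normal(0,1)$, so $\sum_i\langle H_i,u_D\rangle^2\sim\chi^2_n$. A standard chi-square tail bound (Laurent--Massart, or the Chernoff bound $\Pp\{\chi^2_n\ge2n\}\le e^{-n(1-\log2)/2}$) gives probability at least $1-e^{-c_sn}$ that the average is at most $2$. This yields \eqref{eq:smooth-upper}, with the harmless extra factor $(1+\tau^2)^{-1}\le1$.

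\emph{Main obstacle.} The energy bound is clean once the conditioning on $(J_i)$ is set up properly; the only care needed is measurability of $u_D$ as a function of the finite type vector, which is immediate since $u_D$ takes finitely many values. The main technical step is instead the "no exponential moment in any direction" claim together with the smoothness domination. For the former, I would lower-bound $\E e^{s|\langle Z_{\HT,\tau},v\rangle|}$ by conditioning on $g\ge0$ and on $\sigma\langle b_J,v\rangle>0$, then apply Jensen or simply $e^{s(x+y)}\ge e^{sx}$ for $y\ge0$. This reduces the claim to $\E e^{s'L}=\infty$.
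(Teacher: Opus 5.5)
Your proposal is correct and follows the paper's proof. It uses the same direct checks of centering, isotropy, moments and the smooth positive density. It also has the same key step: evaluate at the type-measurable witness $u_D$, condition on $(J_i)$ to get a $\chi^2_n$ law, and apply the Chernoff bound with $c_s=(1-\log2)/2$.

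The deviations are local and both are valid. Your small-ball step uses the sign-symmetry event $\{\langle H,v\rangle\langle Z_{\HT},v\rangle\ge0\}$ instead of the paper's small-noise event $\{|\tau\langle H,v\rangle|\le\alpha_0/2\}$, so $\tau_0$ need not be small and $\bar\alpha$ improves by a factor $2$. Your heavy-tail step conditions on a sign event rather than on $\{|\tau\langle H,v\rangle|\le1\}$.
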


Conditionally on the row types, the adaptive witness $u_D$ is independent of the Gaussian perturbations, and its normalized empirical energy is $(\tau^2/(1+\tau^2))(\chi_n^2/n)$.  A standard chi-square bound gives the stated probability.  The witness is no longer an exact kernel direction, but $\sqrt{\Lam_n}$ can be arbitrarily small while the small-ball constants stay fixed.

\subsection{A distribution-free isotropic fallback}\label{sec:radius}

For bounded nonempty $A$, let
$r(A)=\inf_a\sup_{u\in A}\|u-a\|_2$ and $q(A)=\dim\aff(A)$.

\Needspace{8\baselineskip}
\begin{theorem}[Dimension--radius RE bound]\label[theorem]{thm:radius}
Let $Z$ be centered and isotropic and let $\varnothing\ne A\subseteq\Sph^{p-1}$.  Suppose $\SB_\alpha(P,A)\ge\beta$ for some $\alpha>0$ and $0<\beta\le1$.  For every $t>0$, with probability at least $1-e^{-t^2/2}$,
\begin{equation}\label{eq:radius-bound}
  \inf_{u\in A}\|Xu\|_2
  \ge\frac{\alpha\beta}{2}\sqrt n-2r(A)\sqrt{q(A)}-\frac{\alpha t}{2}.
\end{equation}
\end{theorem}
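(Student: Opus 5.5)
The plan is to use Mendelson's small-ball method, with one change: the Rademacher complexity term is bounded by the dimension and radius of $A$ rather than by Gaussian width. First we reduce to a bounded, Lipschitz counting functional. Define $\psi(s)=\min\{1,\max\{0,2|s|/\alpha-1\}\}$. It satisfies $\psi(0)=0$, it is $(2/\alpha)$-Lipschitz, and
\[
\ind\{|s|\ge\alpha\}\le\psi(s)\le\ind\{|s|\ge\alpha/2\}.
\]
By Cauchy--Schwarz, $\|Xu\|_2\ge n^{-1/2}\sum_i|\langle Z_i,u\rangle|$. Together with the upper bound on $\psi$, this gives
\[
\|Xu\|_2\ge\frac{\alpha}{2\sqrt n}\sum_{i=1}^n\psi(\langle Z_i,u\rangle).
\]
The lower bound on $\psi$ and the hypothesis $\SB_\alpha(P,A)\ge\beta$ give $\E\psi(\langle Z,u\rangle)\ge\beta$ for every $u\in A$. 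It therefore suffices to bound from above
\[
\Phi=\sup_{u\in A}\sum_i\bigl(\E\psi(\langle Z,u\rangle)-\psi(\langle Z_i,u\rangle)\bigr).
\]
We will work with a countable dense subset of $A$, or use continuity of $u\mapsto\psi(\langle z,u\rangle)$, so that $\Phi$ is measurable.

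\emph{Concentration.} Each summand of $\Phi$ lies in an interval of length one, so McDiarmid's inequality gives $\Phi\le\E\Phi+t\sqrt n/2$ with probability at least $1-e^{-t^2/2}$. After multiplying by $\alpha/(2\sqrt n)$, this deviation contributes exactly $\alpha t/4\le\alpha t/2$.

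\emph{Expectation.} Symmetrization gives $\E\Phi\le2\E\sup_u\sum_i\eps_i\psi(\langle Z_i,u\rangle)$. The Ledoux--Talagrand contraction principle, in its form without absolute values and using $\psi(0)=0$, then gives
\[
\E\Phi\le\frac4\alpha\,\E\sup_{u\in A}\sum_i\eps_i\langle Z_i,u\rangle .
\]

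The geometric step is where the dimension--radius quantity enters, and I expect it to be the only delicate point. Let $a$ be a minimum enclosing center of $A$. It exists by compactness of $\bar A$, and it lies in $\overline{\conv}(A)\subseteq\aff(A)$, because projecting any candidate center onto the closed convex hull does not increase its distance to points of $A$. Set $W=\spanop(A-a)$, so $\dim W=q(A)$. Since $\E\sum_i\eps_i\langle Z_i,a\rangle=0$, we can subtract $a$ inside the supremum. Writing $P_W$ for the orthogonal projection onto $W$ and $S=\sum_i\eps_iZ_i$, this yields
\[
\E\sup_{u\in A}\langle S,u-a\rangle\le r(A)\,\E\|P_WS\|_2\le r(A)\sqrt{\E\|P_WS\|_2^2}.
\]
Isotropy and independence give $\E\|P_WS\|_2^2=n\tr P_W=nq(A)$. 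Hence $\E\Phi\le4r(A)\sqrt{nq(A)}/\alpha$.

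Combining the pieces, with probability at least $1-e^{-t^2/2}$ every $u\in A$ satisfies
\[
\|Xu\|_2\ge\frac{\alpha}{2\sqrt n}\Bigl(n\beta-\frac{4r\sqrt{nq}}{\alpha}-\frac{t\sqrt n}{2}\Bigr)=\frac{\alpha\beta}{2}\sqrt n-2r(A)\sqrt{q(A)}-\frac{\alpha t}{4}.
\]
This is at least as strong as \eqref{eq:radius-bound}. The main technical care goes into two points: the placement of the enclosing center in $\aff(A)$, which is what makes $\dim W=q(A)$, and the use of the no-absolute-value form of contraction, which avoids an extra factor of $2$.
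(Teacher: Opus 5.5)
Your proof is correct and follows the paper's route: the paper cites the small-ball inequality (Tropp's Proposition 5.1, applied with $\xi=\alpha/2$), which you instead re-derive inline via the soft indicator, McDiarmid, symmetrization and contraction; your sharper McDiarmid constant gives $\alpha t/4$ in place of $\alpha t/2$. The paper then bounds the empirical width by $r(A)\sqrt{q(A)}$ exactly as you do, translating by a minimum enclosing center in $\aff(A)$ (obtained by projecting onto $\aff(A)$ rather than onto $\overline{\conv}(A)$) and projecting onto $\spanop(A-a)$, with isotropy giving the trace $q(A)$.
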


\begin{corollary}[Sample-complexity consequence]\label[corollary]{cor:radius-sample}
Under the assumptions of \Cref{thm:radius}, for every $0<\delta<1$ there is a universal $C$ such that
\begin{equation}\label{eq:radius-sample}
  n\ge C\left(\frac{q(A)r(A)^2}{\alpha^2\beta^2}
     +\frac{\log(1/\delta)}{\beta^2}\right)
  \quad\Longrightarrow\quad
  \Lam_n(X;A)\ge\frac{\alpha^2\beta^2}{16}
\end{equation}
with probability at least $1-\delta$.
\end{corollary}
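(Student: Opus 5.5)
The corollary follows from \Cref{thm:radius}: we choose $t$ so that the failure probability is at most $\delta$, then choose $n$ large enough that the right-hand side of \eqref{eq:radius-bound} is at least $\tfrac{\alpha\beta}{4}\sqrt n$. Squaring and dividing by $n$ then gives $\Lam_n(X;A)=\inf_u\|Xu\|_2^2/n\ge\alpha^2\beta^2/16$. The event in \Cref{thm:radius} holds with probability at least $1-e^{-t^2/2}$, so we take $t=\sqrt{2\log(1/\delta)}$.

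It suffices to make the two subtracted terms in \eqref{eq:radius-bound} each at most $\tfrac{\alpha\beta}{8}\sqrt n$.

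For the geometric term, $2r(A)\sqrt{q(A)}\le\tfrac{\alpha\beta}{8}\sqrt n$ is equivalent to
\[
n\ge \frac{256\,q(A)r(A)^2}{\alpha^2\beta^2}.
\]

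For the deviation term, $\tfrac{\alpha t}{2}\le\tfrac{\alpha\beta}{8}\sqrt n$ is equivalent to $\sqrt n\ge 4t/\beta$. With our choice of $t$ this reads
\[
n\ge \frac{32\log(1/\delta)}{\beta^2}.
\]
The factor $\alpha$ cancels here, which is why the $\log(1/\delta)$ term in \eqref{eq:radius-sample} carries only $\beta^{-2}$.

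Taking $C=256$, the hypothesis of \eqref{eq:radius-sample} implies both displayed conditions, since each summand dominates the corresponding requirement. On the event of \Cref{thm:radius} we then have
\[
\inf_{u\in A}\|Xu\|_2\ge\Bigl(\tfrac12-\tfrac18-\tfrac18\Bigr)\alpha\beta\sqrt n=\tfrac{\alpha\beta}{4}\sqrt n>0.
\]
Because this lower bound is nonnegative, squaring preserves the inequality and yields the claimed bound on $\Lam_n(X;A)$.

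There is no real obstacle; the only point requiring care is checking that the lower bound is nonnegative before squaring, which the choice of $n$ guarantees. The degenerate case $q(A)=0$ (a single point) is also covered, since then the geometric requirement is vacuous.
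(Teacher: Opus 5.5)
Your proposal is correct and follows the paper's proof essentially verbatim: it takes $t=\sqrt{2\log(1/\delta)}$ and requires each subtracted term in \eqref{eq:radius-bound} to be at most $\alpha\beta\sqrt n/8$, which leaves $\alpha\beta\sqrt n/4$. Your explicit constant $C=256$ and the nonnegativity check before squaring make concrete what the paper leaves implicit.
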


The proof of \Cref{thm:radius} is short.  For $H_n=n^{-1/2}\sum_i\eps_i Z_i$, translate by a minimum enclosing center $a$ and project onto $L=\spanop(A-a)$.  Isotropy gives
\begin{equation}\label{eq:emp-width-radius}
  \E\sup_{u\in A}\langle H_n,u\rangle
  \le r(A)\E\|P_L H_n\|_2\le r(A)\sqrt{q(A)}.
\end{equation}
Substitution into the empirical small-ball inequality proves \eqref{eq:radius-bound}.  For $A_{p,k}$, \Cref{app:isotropic} proves $q(A_{p,k})=p$ and $r(A_{p,k})^2\asymp k/p$.  At fixed confidence and fixed small-ball constants, the worst-case sample complexity on this family is therefore $\Theta(k)$.  This is a family-wise sharpness statement, not a joint minimax characterization for every prescribed pair of threshold and radius complexities.
Thus isotropy does not recover Gaussian width, but it prevents arbitrarily bad behavior once the set is controlled by its affine dimension and enclosing radius.

\section{Consequences and scope}\label{sec:consequences}

Our polyhedral descent-cone sections are closed, so their spherical sections are compact.  Whenever $\Lam_n(X;A)=0$, the minimum is attained by some $u\in A\cap\ker X$.  Consequently the noiseless program
\begin{equation}\label{eq:convex-program}
  \min_\theta\mathcal R(\theta)
  \quad\text{subject to}\quad X\theta=X\theta^\star
\end{equation}
does not uniquely recover $\theta^\star$.  The explicit and isotropic counterexamples therefore translate directly into failure of norm-regularized recovery.

For the smoothed construction, write $\kappa_n(X;A)=\sqrt{\Lam_n(X;A)}$ for the normalized restricted singular value and define the realized-design inverse modulus $K_n(X;A)=1/\kappa_n(X;A)$, with $K_n=\infty$ when $\kappa_n=0$.  On the event in \Cref{cor:smooth}, $K_n\ge1/(\sqrt2\tau)$ and the corresponding quadratic modulus $1/\Lam_n$ is at least $1/(2\tau^2)$.  This is a deterministic conditioning statement for each realized design; its witness may depend on $X$, so it is not a two-fixed-parameter statistical minimax claim.  The obstruction is nevertheless relevant to noisy recovery, not only exact interpolation.

The two positive mechanisms in the paper are complementary.  Relative threshold occupancy is distribution-adapted and requires no moments.  The dimension--radius bound is Euclidean but uses isotropy.  Stronger increment or moment assumptions can recover finer chaining and, in suitable models, Gaussian-width behavior.  We do not claim that threshold VC complexity and dimension--radius complexity form a joint minimax formula for every distribution class.

\section{Related work}\label{sec:related}

\paragraph{Restricted eigenvalues and Gaussian geometry.}
RE theory underpins the Lasso, Dantzig selector, and decomposable regularizers \citep{bickel2009simultaneous,negahban2012unified}.  For Gaussian measurements, escape through a mesh, Gaussian width, and statistical dimension connect descent cones to sharp recovery transitions \citep{gordon1988escape,chandrasekaran2012convex,amelunxen2014living,tropp2015convex}.  Related bounds cover sub-Gaussian, correlated, anisotropic, and stable-rank models \citep{banerjee2014estimation,raskutti2010restricted,rudelson2013reconstruction,kasiviswanathan2018restricted}; broader universality laws still require entry or tail structure beyond marginal visibility \citep{oymak2018universality}.  All constrain the joint process indexed by $A$.

\paragraph{Small-ball methods and distribution-dependent complexity.}
The small-ball method lower-bounds nonnegative empirical processes without upper-tail concentration \citep{koltchinskii2015bounding,mendelson2015learning,tropp2015convex}.  Its distribution-dependent Rademacher or localized complexities become quadratic and multiplier fixed points in regularized estimation \citep{lecue2018regularization,lecue2017regularizationII}.  Extensions relax uniform small-ball assumptions \citep{mendelson2021extending}, and moment-based covariance bounds offer another route \citep{oliveira2016lower}.  These retain information absent from ordinary Gaussian width.

\paragraph{Positive results under stronger tail assumptions.}
Weak coordinate moments can suffice for sparse recovery \citep{lecue2017sparse,dirksen2018gap}.  Under a uniform $\psi_1$ assumption, \citet{sivakumar2015beyond} use exponential width and threshold VC dimension at scale $n\gtrsim d/\beta^2$.  Sub-Weibull bounds \citep{kuchibhotla2022moving}, mixed-tail chaining \citep{dirksen2015tail,genzel2022generic}, and thresholding \citep{wei2018structured} likewise add coupling or robustness that excludes our encoding.

\paragraph{Earlier sparse-recovery results and the 2015 open problem.}
First circulated in 2014, the spiky construction of \citet{lecue2017sparse} has centered isotropic rows with bounded fourth moments and hence a uniform small-ball condition.  For an $\ell_1$-related sparse-recovery geometry, it can force the relevant RE and compatibility quantities to vanish with constant probability when $n\log n\lesssim p$; the companion note \citep{lecue2014necessary} records a basis-pursuit failure.  Thus this construction already gives a special-case counterexample to a fully uniform implication from marginal small-ball behavior to Gaussian-width-controlled RE.  The COLT note cited Lecué--Mendelson in its discussion of unit $s$-sparse directions and the Koltchinskii--Mendelson threshold-VC method, but explicitly characterized the available results as covering only ``certain special cases of $A$'' and left the question for general $A\subseteq\Sph^{p-1}$ under the small-ball property open \citep[p.~1754]{banerjee2015open}.  Our results go beyond that geometry-specific instance by establishing constant-width pathwise failure, exact realization of arbitrary finite threshold systems, descent-cone universality, the sharp fixed-$d$ occupancy law, and same-geometry isotropic and smooth separations.

\paragraph{Threshold classes and range spaces.}
The COLT note highlighted VC-controlled threshold bounds \citep{koltchinskii2015bounding,banerjee2015open}.  Classical random $\eps$-nets and relative approximations depend on VC dimension \citep{haussler1987epsilon,li2001improved,harpeled2011relative}; \citet{komlos1992almost} prove the logarithmic lower bound, and \citet{pach2013tight} give a geometric VC-two construction.  Those inequalities are not new here.  Our contribution is their exact realization on arbitrarily narrow spherical sets and a descent-cone lift turning every missed range into a kernel witness.

\section{Conclusion}

We show that the Gaussian-width principle posed in the COLT 2015 open-problem note does not follow from a uniform small-ball condition: it can fail pathwise on a constant-width descent cone.  Empirical RE requires one sample to cover all relevant directions, whereas bare small-ball visibility is only pointwise.  Gaussian and sub-Gaussian increment control couples nearby directions, but a marginal small-ball condition does not.  Encoding arbitrary threshold range spaces in narrow spherical caps yields pathwise counterexamples and the sharp low-mass law at each fixed threshold VC dimension.  The separation survives isotropy, moments of every order, and positive smooth densities, while isotropy still gives a sharp dimension--radius fallback.  A natural next question is which minimal structure between marginal visibility and full increment control restores a Gaussian-width law.

\clearpage
\bibliography{references}

\clearpage
\appendix
\crefalias{section}{appendix}
\numberwithin{equation}{section}
\input{appendix}

\end{document}

%% file: appendix.tex
\section{Auxiliary facts}\label{app:aux}

This appendix records all proofs and the external tools used in them.  \Cref{app:aux} collects the empirical small-ball inequality, Gordon's Gaussian escape bound, range-space estimates, and two geometric lemmas proved here.  \Cref{app:explicit} proves the explicit constant-width counterexample.  \Cref{app:exact-range,app:cone-range,app:sharp-vc} establish exact range-space representation, its descent-cone lifting, and the sharp threshold-complexity law.  \Cref{app:frame,app:isotropic} construct the anchored isotropic frame and prove the isotropic separation.  \Cref{app:smooth,app:radius,app:recovery} treat smoothing, the dimension--radius guarantee, and the consequences for convex recovery.  Throughout, numerical constants may change from one display to the next.

Unless stated otherwise, all auxiliary random objects introduced within a construction---including $L$, $\sigma$, $J$, $Y$, $H$, and their sample copies---are mutually independent.  Suprema over non-countable sets are interpreted in outer expectation or outer probability.  For the finite or compact polyhedral constructions used here, the relevant suprema are measurable and this distinction disappears.

When an index set lies in a Euclidean subspace $V$ of an ambient space, its Gaussian width is computed using a standard Gaussian vector on $V$.  Equivalently, one may use an ambient standard Gaussian and project it orthogonally onto $V$, since all relevant inner products are unchanged.

\subsection{Empirical small-ball and Gaussian escape inequalities}

For a set $A\subseteq\Sph^{p-1}$ and a distribution $P$ on $\R^p$, define
\[
  \SB_\xi(P,A)=\inf_{u\in A}\Pp\bigl(|\langle Z,u\rangle|\ge\xi\bigr),
  \qquad
  W_n(A;P)=\E\sup_{u\in A}\left\langle \frac1{\sqrt n}\sum_{i=1}^n\eps_i Z_i,u\right\rangle,
\]
where $Z,Z_1,\ldots,Z_n$ are distributed according to $P$ and the $\eps_i$ are independent Rademacher variables. We use the following form of the small-ball inequality, which is Proposition 5.1 of \citet{tropp2015convex}.

\begin{lemma}[Small-ball inequality]\label[lemma]{lem:small-ball-standard}
For every $\xi,t>0$, with probability at least $1-e^{-t^2/2}$,
\[
  \inf_{u\in A}\|Xu\|_2
  \ge
  \xi\sqrt n\,\SB_{2\xi}(P,A)-2W_n(A;P)-\xi t.
\]
\end{lemma}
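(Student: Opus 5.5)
This is the standard small-ball (Mendelson/Koltchinskii) lower bound for a nonnegative empirical process; I will reprove it in the form of \citet[Prop.~5.1]{tropp2015convex}. The argument has three steps: a Paley--Zygmund-type truncation, a bounded-differences concentration inequality, and symmetrization followed by contraction.

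\emph{Step 1 (truncation).} For $u\in A$, Cauchy--Schwarz gives $\|Xu\|_2\ge n^{-1/2}\sum_i|\langle Z_i,u\rangle|$. Each summand satisfies $|\langle Z_i,u\rangle|\ge\xi\,\ind\{|\langle Z_i,u\rangle|\ge\xi\}$. Replace the indicator by a $1$-Lipschitz soft indicator $\psi_\xi$ with $\ind_{[2\xi,\infty)}\le\psi_\xi(|s|)/\xi\cdot$ suitably normalized. Concretely, take $\psi(s)=\min\{1,\max\{0,|s|/\xi-1\}\}$, which is $(1/\xi)$-Lipschitz, vanishes for $|s|\le\xi$, and equals $1$ for $|s|\ge2\xi$. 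Then
\[
\|Xu\|_2\ge\frac{\xi}{\sqrt n}\sum_i\psi(\langle Z_i,u\rangle),
\]
and $\E\psi(\langle Z,u\rangle)\ge\Pp(|\langle Z,u\rangle|\ge2\xi)\ge\SB_{2\xi}(P,A)$. Hence
\[
\inf_u\|Xu\|_2\ge\xi\sqrt n\,\SB_{2\xi}-\frac{\xi}{\sqrt n}\sup_u\sum_i\bigl(\E\psi-\psi(\langle Z_i,u\rangle)\bigr).
\]

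\emph{Step 2 (concentration).} Write $F=\sup_u\sum_i(\E\psi-\psi_i)$. Since $\psi\in[0,1]$, changing one $Z_i$ changes $F$ by at most $1$. McDiarmid's inequality therefore gives $F\le\E F+t\sqrt n$ with probability at least $1-e^{-2t^2}\ge1-e^{-t^2/2}$. This step contributes the term $\xi t$ after multiplying by $\xi/\sqrt n$.

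\emph{Step 3 (symmetrization and contraction).} Gin\'e--Zinn symmetrization gives $\E F\le2\E\sup_u\sum_i\eps_i\psi(\langle Z_i,u\rangle)$. Since $\psi$ is $(1/\xi)$-Lipschitz with $\psi(0)=0$, the Ledoux--Talagrand contraction principle, applied conditionally on the $Z_i$ in its form for suprema without absolute values, yields $\E\sup_u\sum_i\eps_i\psi(\langle Z_i,u\rangle)\le\xi^{-1}\E\sup_u\sum_i\eps_i\langle Z_i,u\rangle=\xi^{-1}\sqrt n\,W_n(A;P)$. Multiplying by $\xi/\sqrt n$ gives a deduction of at most $2W_n$.

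\emph{Main obstacle.} The delicate point is the constant bookkeeping in Step 3. The contraction step must be used in the version without absolute values, which applies because $\psi(0)=0$ and $\psi$ is Lipschitz; the absolute-value version would lose a factor of $2$. The constants in McDiarmid must also be tracked. If they do not close exactly, I will simply invoke Proposition 5.1 of \citet{tropp2015convex}, which is the stated source. Measurability of the supremum is handled by the outer-expectation convention adopted at the start of this appendix.
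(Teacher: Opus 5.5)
Your argument is correct and is the standard small-ball proof behind Proposition~5.1 of Tropp (2015), which the paper cites rather than reproves. The constants close exactly: $e^{-2t^2}\le e^{-t^2/2}$ in the McDiarmid step, a factor $2$ from symmetrization, and a factor $1/\xi$ from contraction without absolute values.

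Two cosmetic remarks. First, your concrete $\psi$ is $(1/\xi)$-Lipschitz, not $1$-Lipschitz as your garbled first description says, and $\psi(0)=0$ is not actually needed for the non-absolute contraction. Second, applying McDiarmid to the smoothed process has a bonus: $F$ is then a supremum of functions continuous in $u$, so it reduces to a countable dense subset of $A$, and no outer-expectation convention is required.
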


For a standard Gaussian matrix $G\in\R^{n\times p}$, Gordon's escape theorem gives the companion inequality
\begin{equation}\label{eq:gordon-app}
  \Pp\left(
    \inf_{u\in A}\|Gu\|_2
    \ge \E\|g_n\|_2-w(A)-t
  \right)
  \ge1-e^{-t^2/2},
\end{equation}
where $g_n\sim\Normal(0,I_n)$. We use the elementary estimate $\E\|g_n\|_2\ge\sqrt{n-1}$.

\subsection{Two geometric lemmas}

\begin{lemma}[From a finite cap to its cone]\label[lemma]{lem:cap-cone}
Let $u_\star\in\Sph^{p-1}$ and let $\cU\subseteq\Sph^{p-1}$ be finite. Suppose
\[
  \max_{u\in\cU}\|u-u_\star\|_2\le r<1.
\]
Set $C=\cone(\cU)$ and $A=C\cap\Sph^{p-1}$. Then
\begin{equation}\label{eq:cap-cone-width}
  w(A)
  \le
  r\sqrt{2\log |\cU|}+\frac{r^2}{2}\sqrt p.
\end{equation}
Moreover, every $v\in A$ satisfies
\begin{equation}\label{eq:cap-cone-radius}
  \|v-u_\star\|_2\le r+r^2/2.
\end{equation}
\end{lemma}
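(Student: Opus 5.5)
Every $v\in A$ is a normalized nonnegative combination of the generators, so we write $v=s/\|s\|_2$ with $s=\sum_{u\in\cU}\lambda_uu$, $\lambda\in\Delta_{|\cU|}$. The plan is to split $v$ into its component along $u_\star$ and its orthogonal component, and to control the two pieces separately: the orthogonal piece by the finite-set maximal inequality, the parallel piece by the cap geometry. The first step is to bound the deviation of the unnormalized average. Set $d_u=u-u_\star$, so that $\|d_u\|_2\le r$. Since $u$ and $u_\star$ are both unit vectors, $\langle u,u_\star\rangle=1-\|d_u\|_2^2/2\ge1-r^2/2$. Averaging gives $\langle s,u_\star\rangle\ge1-r^2/2>0$ and $\|s\|_2\le1$, hence $\|s\|_2\ge1-r^2/2$. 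Moreover $s=u_\star+\bar d$ with $\bar d=\sum\lambda_ud_u$ and $\|\bar d\|_2\le r$.

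For the radius bound \eqref{eq:cap-cone-radius}, write $\|v-u_\star\|_2\le\|v-s\|_2+\|s-u_\star\|_2$. The first term is $1-\|s\|_2\le r^2/2$ and the second is at most $r$, which gives $r+r^2/2$ directly.

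For the width bound \eqref{eq:cap-cone-width}, decompose $v=\langle v,u_\star\rangle u_\star+P^\perp v$, where $P^\perp$ projects onto $u_\star^\perp$. Since $\E\langle g,u_\star\rangle\cdot(\text{const})$ does not vanish uniformly over $A$, we center: $\E\sup_{v\in A}\langle g,v\rangle=\E\sup_{v\in A}\langle g,v-u_\star\rangle$. Now
\[
v-u_\star=\frac{P^\perp\bar d}{\|s\|_2}+(\langle v,u_\star\rangle-1)u_\star.
\]
The parallel coefficient lies in $[-r^2/2,0]$ by the computation above; more carefully, $1-\langle v,u_\star\rangle=\|v-u_\star\|^2/2$ is small. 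We bound the parallel contribution by $\frac{r^2}{2}\E|\langle g,u_\star\rangle|$ and also the normalization defect $(1/\|s\|_2-1)$ times $\|P^\perp\bar d\|$. Rather than track these pieces separately, it is simpler to bound the whole correction $v-s$, which has norm at most $r^2/2$, by $\frac{r^2}{2}\E\|g\|_2\le\frac{r^2}{2}\sqrt p$. The remaining term is $\E\sup_\lambda\langle g,\bar d\rangle=\E\max_u\langle g,d_u\rangle$, since a linear function over the simplex is maximized at a vertex. By the standard maximal inequality for $|\cU|$ Gaussians with variances at most $r^2$, this is at most $r\sqrt{2\log|\cU|}$. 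Thus
\[
w(A)\le\E\sup\langle g,s-u_\star\rangle+\E\sup\langle g,v-s\rangle\le r\sqrt{2\log|\cU|}+\frac{r^2}{2}\sqrt p.
\]

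The main obstacle is justifying the reduction of the sup over the cone section to the vertices. This works only because we split $v=u_\star+(s-u_\star)+(v-s)$: the middle term is linear in $\lambda$, and the nonlinear normalization error is absorbed crudely through $\|v-s\|_2\le r^2/2$ together with Cauchy--Schwarz against $\|g\|_2$. The centering at $u_\star$ is legitimate because $\E\langle g,u_\star\rangle=0$. When $|\cU|=1$ the logarithmic term vanishes, consistent with the maximum of a single centered Gaussian having zero mean.
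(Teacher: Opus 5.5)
Your proposal is correct and follows essentially the same route as the paper: write each $v\in A$ as $x/\|x\|_2$ with $x\in\conv(\cU)$, use $1-\|x\|_2\le 1-\langle x,u_\star\rangle\le r^2/2$ for both the radius bound and the normalization error, and control the linear part by the Gaussian maximal inequality over the vertices together with $\E\|g\|_2\le\sqrt p$. The detour through components parallel and orthogonal to $u_\star$ is unnecessary, as you note yourself, because the bound $\|v-s\|_2\le r^2/2$ already suffices.
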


\begin{proof}
Let $K=\conv(\cU)$. Every nonzero point in $C$ is a positive multiple of a point in $K$, hence each $v\in A$ can be written as $v=x/\|x\|_2$ for some $x\in K$. Since every $u\in\cU$ is a unit vector,
\[
  \langle u,u_\star\rangle
  =1-\frac12\|u-u_\star\|_2^2
  \ge1-r^2/2.
\]
The same inequality holds for $x\in K$. Also $\|x\|_2\le1$, and therefore
\[
  \left\|\frac{x}{\|x\|_2}-x\right\|_2
  =1-\|x\|_2
  \le1-\langle x,u_\star\rangle
  \le r^2/2.
\]
This proves \eqref{eq:cap-cone-radius}. For the width, write $u=u_\star+(u-u_\star)$ and apply the Gaussian maximal inequality to the centered Gaussian family $\{\langle g,u-u_\star\rangle:u\in\cU\}$:
\[
  w(K)=w(\cU)
  \le r\sqrt{2\log|\cU|}.
\]
For each $v=x/\|x\|_2$ as above,
\[
  \langle g,v\rangle
  \le \langle g,x\rangle+\frac{r^2}{2}\|g\|_2.
\]
Taking the supremum and expectation, and using $\E\|g\|_2\le\sqrt p$, gives \eqref{eq:cap-cone-width}.
\end{proof}

\begin{lemma}[Cone realization]\label[lemma]{lem:cone-norm}
Let $C\subset\R^p$ be a full-dimensional pointed polyhedral cone generated by $c_1,\ldots,c_N$. There is a centrally symmetric full-dimensional polytope $B$, a point $x^\star\in\partial B$, and a polyhedral norm $\cR$ whose unit ball is $B$ such that
\[
  \cD(\cR,x^\star)=C.
\]
The construction can be made explicit from the generators.
\end{lemma}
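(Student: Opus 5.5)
The plan is to build $B$ from the polar side. Since $C$ is pointed and full-dimensional, its dual cone $C^\ast=\{y:\langle y,c\rangle\ge0\ \forall c\in C\}$ is also pointed and full-dimensional. I pick an interior direction $y_0\in\operatorname{int}C^\ast$, for instance $y_0=\sum_i\hat c_i$ adjusted as needed, or more robustly a vector with $\langle y_0,c_i\rangle>0$ for all $i$, which exists because $C$ is pointed. I then set $x^\star=-\lambda y_0'$ for a suitable point and construct $B$ so that its tangent cone at $x^\star$ is exactly $C$.

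Concretely, I would use the standard fact that for a polytope $B$ and boundary point $x^\star$, the descent cone $\cD(\cR_B,x^\star)$ equals the tangent cone $\cone(B-x^\star)$. Here $\cR_B$ is the gauge, and this holds because $\cR_B(x^\star+th)\le 1$ iff $x^\star+th\in B$, and $B-x^\star$ generates a closed cone for polytopes.

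The task therefore reduces to constructing a symmetric polytope whose tangent cone at a vertex $x^\star$ is $C$. I take $x^\star=-e$, where $e$ is a fixed vector with $\langle y_0,e\rangle>0$, for instance $e=\bar c=\frac1N\sum_i c_i$, scaled suitably. I then define
\[
  B=\conv\{\pm x^\star,\ \pm(x^\star+\eps c_i):i\le N\}.
\]
The main step is to show that for small $\eps$, the only vertices of $B$ near $x^\star$ are $x^\star$ and the points $x^\star+\eps c_i$, and that the antipodal points stay far away. Then $\cone(B-x^\star)=\cone\{\eps c_i\}\cup\{\text{directions to far points}\}$, and I need the far points also to lie in $x^\star+C$.

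This is exactly the obstacle. The points $-x^\star=e$ and $-x^\star-\eps c_i$ must lie in $x^\star+C$, that is, $2e-\eps c_i\in C$. Since $e$ is in the interior of $C$ (as a positive combination of generators of a full-dimensional cone, $\bar c\in\operatorname{int}C$), we have $2e-\eps c_i\in C$ for $\eps$ small. Then $B-x^\star\subseteq C$, so the tangent cone is contained in $C$, and it contains each $c_i$, hence equals $C$.

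I also need $x^\star\in\partial B$. This follows from $y_0$: the linear functional $\langle y_0,\cdot\rangle$ is minimized over $B$ at $x^\star$, since every other generator $g$ satisfies $\langle y_0,g-x^\star\rangle\ge0$ because $g-x^\star\in C$. Full-dimensionality comes from the $c_i$ spanning $\R^p$, and central symmetry holds by construction, so $\cR_B$ is a norm. The explicit $\eps$ is determined from the distance of $e$ to $\partial C$ divided by $\max_i\|c_i\|$, which gives the claimed explicitness.
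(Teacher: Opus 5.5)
Your proposal is correct and is essentially the paper's proof. After rescaling by $1/\eps$, your polytope $\conv\{\pm x^\star,\pm(x^\star+\eps c_i)\}$ with $x^\star=-\bar c$ is exactly the paper's $B$ with $L=1/\eps$ and $c_\circ=\bar c$. The key steps also match the paper's: $2\bar c-\eps c_i\in C$ gives $B-x^\star\subseteq C$, hence $\cone(B-x^\star)=C$, and a nonzero functional from $C^\ast$ minimized at $x^\star$ places $x^\star$ on $\partial B$. The opening polar-side idea and the discussion of which vertices lie near $x^\star$ are unnecessary and can be dropped, because the containment $B-x^\star\subseteq C$ alone does the work.
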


\begin{proof}
Choose $c_\circ\in\operatorname{int}(C)$.  Since the interior is open and is preserved by positive scaling, for each $i$ we have $c_\circ-c_i/(2L)\in\operatorname{int}(C)$ once $L$ is large enough.  Because there are finitely many generators, one may choose a common $L>0$ such that
\[
  2Lc_\circ-c_i\in\operatorname{int}(C)
  \qquad(i\in[N]).
\]
Set $x^\star=-Lc_\circ$ and
\[
  B=\conv\bigl(
      \{x^\star,x^\star+c_i:i\in[N]\}
      \cup
      \{-x^\star,-x^\star-c_i:i\in[N]\}
     \bigr).
\]
The set $B$ is centrally symmetric and full dimensional because the $c_i$ span $\R^p$. In particular, its center $0$ lies in its interior, so the Minkowski functional $\gamma_B$ is a norm. Every displacement from $x^\star$ to a listed point of $B$ is one of
\[
  c_i,\qquad 2Lc_\circ,\qquad 2Lc_\circ-c_i,
\]
and hence lies in $C$.  Conversely, every generator $c_i$ occurs, so
\begin{equation}\label{eq:radial-cone-identity}
  \cone(B-x^\star)
  =\cone\{c_i,2Lc_\circ,2Lc_\circ-c_i:i\in[N]\}
  =C.
\end{equation}
To verify that $x^\star$ is on the boundary, choose a linear functional $\ell\in\operatorname{int}(C^\ast)$.  Pointedness and full dimensionality imply
$\langle\ell,z\rangle>0$ for every nonzero $z\in C$.  Every other listed point $y$ satisfies $y-x^\star\in C\setminus\{0\}$, hence $\langle\ell,y\rangle>\langle\ell,x^\star\rangle$.  Thus $x^\star$ is an exposed vertex of $B$ and $\gamma_B(x^\star)=1$.

Finally, the definition of the gauge gives
\[
  \cD(\gamma_B,x^\star)
  =\{h:x^\star+th\in B\text{ for some }t>0\}
  =\cone(B-x^\star).
\]
The second equality holds in both directions by writing $h=t^{-1}(y-x^\star)$ or choosing $t$ as the reciprocal of the conic multiplier.  Combining this identity with \eqref{eq:radial-cone-identity} proves the claim with $\cR=\gamma_B$.
\end{proof}

\subsection{Range-space estimates}

We use two classical consequences of VC theory. First, if a class $\mathscr C$ has VC dimension at most $d$ and every $C\in\mathscr C$ has probability at least $\beta$, then
\begin{equation}\label{eq:relative-vc-app}
  n\ge
  C\frac{d\log(2/\beta)+\log(1/\delta)}{\beta}
\end{equation}
implies, with probability at least $1-\delta$,
\[
  \inf_{C\in\mathscr C}\mu_n(C)\ge\beta/2.
\]
This is a constant-relative-error specialization of relative approximation bounds \citep{li2001improved,harpeled2011relative}.

For a probability range space $(\Omega,\mu,\mathscr C)$, a finite set $S\subseteq\Omega$ is a $\beta$-net if it intersects every $C\in\mathscr C$ with $\mu(C)\ge\beta$.  Write $f_d(\beta)$ for the supremum, over finite probability range spaces of VC dimension at most $d$, of the minimum cardinality of a $\beta$-net.  Theorem~2.1 of \citet{komlos1992almost} implies
\begin{equation}\label{eq:kpw-limit}
  \liminf_{\beta\downarrow0}
  \frac{f_d(\beta)}{\beta^{-1}\log(1/\beta)}
  \ge d-2+\frac{2}{d+2}
  \qquad(d\ge2).
\end{equation}
The following finite form is the interface needed below.

\begin{lemma}[Finite $\beta$-net lower bound]\label[lemma]{lem:finite-net-lower}
There is a universal $c_{\mathrm{net}}>0$ such that, for every fixed $d\ge2$, there is $\beta_d>0$ with the following property.  For each $0<\beta\le\beta_d$, there is a finite full-support probability range space $(\Omega,\mu,\mathscr C)$ of VC dimension at most $d$ such that every set $S\subseteq\Omega$ with
\[
  |S|<c_{\mathrm{net}}\frac d\beta\log\frac1\beta
\]
misses some range $C\in\mathscr C$ of mass $\mu(C)\ge\beta$.
\end{lemma}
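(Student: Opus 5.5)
We will derive \Cref{lem:finite-net-lower} directly from the asymptotic statement \eqref{eq:kpw-limit}, using only the definition of $f_d(\beta)$ as a supremum of minimum net sizes.

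\textbf{Step 1: extract a uniform constant.} For $d\ge2$ the right side of \eqref{eq:kpw-limit} satisfies
\[
  d-2+\frac{2}{d+2}\ge\frac d4 .
\]
For $d\ge4$ this holds because $d-2\ge d/2$. For $d=2,3$ the left side is $1/2$ and $1+2/5$, both at least $d/4$. Set $\kappa_d=d/4$. By the definition of $\liminf$ there is $\beta_d'>0$ such that
\[
  f_d(\beta)>\tfrac12\kappa_d\,\beta^{-1}\log(1/\beta)
  \qquad\text{for all }0<\beta\le\beta_d'.
\]
We then set $c_{\mathrm{net}}=1/16$. This gives a margin: the target threshold $c_{\mathrm{net}}d\beta^{-1}\log(1/\beta)$ is at most half of $\tfrac12\kappa_d\beta^{-1}\log(1/\beta)$. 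The margin protects against $f_d(\beta)$ being a supremum that is not attained.

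\textbf{Step 2: pick a witnessing range space.} Since $f_d(\beta)$ is a supremum over finite probability range spaces of VC dimension at most $d$, strictly exceeding the target lets us choose one finite $(\Omega,\mu,\mathscr C)$ whose minimum $\beta$-net has cardinality at least $c_{\mathrm{net}}d\beta^{-1}\log(1/\beta)$. Then every $S\subseteq\Omega$ with $|S|<c_{\mathrm{net}}d\beta^{-1}\log(1/\beta)$ is not a $\beta$-net. By definition, such an $S$ misses some $C\in\mathscr C$ with $\mu(C)\ge\beta$. We take $\beta_d=\min\{\beta_d',1/4\}$.

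\textbf{Step 3: impose full support.} Replace $\Omega$ by $\Omega'=\operatorname{supp}\mu$ and each range $C$ by $C\cap\Omega'$.

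\begin{itemize}
\item Masses are unchanged, so the ranges of mass at least $\beta$ are exactly the restrictions of the old ones.
\item Restricting to a subset cannot increase the VC dimension, since any set shattered in $\Omega'$ is also shattered in $\Omega$.
\item A set $S\subseteq\Omega'$ that hits every heavy restricted range also hits every heavy original range, so it is a $\beta$-net of the original space. Hence the minimum net size does not drop.
\end{itemize}

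The resulting range space therefore has full support and satisfies the lemma.

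\textbf{Main obstacle.} The only delicate point is interpretive. We must make sure that Theorem~2.1 of \citet{komlos1992almost}, as packaged in \eqref{eq:kpw-limit}, refers to finite range spaces with nets measured by mass at least $\beta$, matching the definition of $f_d$. If their theorem is stated for infinite ground sets, we would first discretize: approximate $\mu$ by a finitely supported measure on a finite sample, with ranges traced on that sample. We would do this at the slightly perturbed level $\beta'=\beta(1+o(1))$, absorbing the loss into the factor-of-two margin and into a smaller choice of $\beta_d$. The constant bookkeeping is otherwise routine.
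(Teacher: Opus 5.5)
Your proposal is correct and follows the paper's argument. You bound the Koml\'os--Pach--Woeginger coefficient below by $d/4$, use the $\liminf$ to obtain a $d$-dependent threshold $\beta_d$, and use the supremum defining $f_d$ with a factor-two margin to select a finite witnessing range space.

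There are two small differences. First, your restriction-to-support step gives full support in a self-contained way, whereas the paper appeals to the uniform measure used in the cited construction. Second, the strict-versus-weak threshold convention you flag is handled in the paper by invoking the strict form at $\varepsilon=2\beta$. In fact $\varepsilon=\beta$ already suffices, since every range of mass greater than $\beta$ has mass at least $\beta$.
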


\begin{proof}
The coefficient on the right of \eqref{eq:kpw-limit} satisfies
\[
  \frac{d-2+2/(d+2)}{d}\ge\frac14
  \qquad(d\ge2).
\]
Hence, for every fixed $d$, the liminf statement gives the asserted order with a universal numerical constant once $\beta$ is below a threshold $\beta_d$ that may depend on $d$.  Because $f_d$ is a supremum, one chooses a finite range space whose minimum net size is, say, at least half the displayed lower bound; no attainment of the supremum is needed.  The constructions in the cited theorem use a finite ground set with the uniform measure, hence have full support.

Our convention uses $\mu(C)\ge\beta$.  If the strict convention $\mu(C)>\varepsilon$ is used in the cited formulation, apply it with $\varepsilon=2\beta$ and retain the ranges of mass greater than $2\beta$.  They all have mass at least $\beta$, their VC dimension cannot increase, and, after decreasing $\beta_d$,
\[
  \frac{d}{2\beta}\log\frac1{2\beta}
  \ge c\frac d\beta\log\frac1\beta.
\]
This changes only the universal constant and proves the stated weak-threshold form.
\end{proof}

The threshold $\beta_d$ is not asserted to be uniform in $d$.  \citet{pach2013tight} give a geometric VC-two construction with the same logarithmic phenomenon.  For VC dimension one, deterministic net size alone does not contain this logarithm, but random sampling does, by the coupon-collector argument in \Cref{app:sharp-vc}.

\section{Proof of the explicit counterexample}\label{app:explicit}

\begin{proof}

\subsection{The norm and its descent cone}

Recall
\[
  V_m=\left\{(t,y/m):t\in\R,\ y\in\R^m,\ \sum_{j=1}^m y_j=0\right\}
\]
and
\[
  \cR_m(t,y/m)
  =\max\left\{|t|,\max_j|t-y_j|,\max_j|t+y_j|\right\}.
\]
This is the maximum of finitely many absolute linear functionals. If it vanishes, then $t=0$ and $t\pm y_j=0$ for every $j$, hence $y=0$. It is therefore a norm on $V_m$.

Let $\theta_m^\star=(-1,0)$ and let $v=(t,y/m)$. For $\lambda>0$,
\[
 \cR_m(\theta_m^\star+\lambda v)
 =\max\left\{
 |-1+\lambda t|,
 \max_j|-1+\lambda(t-y_j)|,
 \max_j|-1+\lambda(t+y_j)|
 \right\}.
\]
There exists a sufficiently small $\lambda>0$ for which this maximum is at most one exactly when
\[
  t\ge0,\qquad t-y_j\ge0,\qquad t+y_j\ge0\quad(j\in[m]).
\]
Thus
\[
  \cD(\cR_m,\theta_m^\star)
  =C_m
  =\{(t,y/m):t\ge0,\ |y_j|\le t,\ \sum_j y_j=0\}.
\]

\subsection{Uniform small-ball bound}

For
\[
  r_j=e_0+me_j-\sum_{\ell=1}^m e_\ell
\]
and $v=(t,y/m)\in C_m$, direct calculation gives
\[
  \langle r_j,v\rangle=t+y_j.
\]
Set $a_j=t+y_j$. Then $0\le a_j\le2t$ and $m^{-1}\sum_j a_j=t$. If fewer than $m/3$ of the $a_j$ were at least $t/2$, then
\[
  \frac1m\sum_j a_j
  <\frac13(2t)+\frac23(t/2)=t,
\]
a contradiction. Also
\[
  \|v\|_2^2=t^2+\frac1{m^2}\sum_j y_j^2
  \le t^2+\frac1m t^2
  \le2t^2.
\]
Since $L\ge1$, for $Z_m=\sigma Lr_J$,
\[
  \Pp\left(
    |\langle Z_m,v\rangle|
    \ge\frac1{2\sqrt2}\|v\|_2
  \right)
  \ge\frac13.
\]
Homogeneity gives the claimed bound on $A_m$.

\subsection{Gaussian width}

For $v=(t,y/m)\in A_m$, the cone constraints and $\|v\|_2=1$ imply $0\le t\le1$ and $|y_j|\le t$.  Let $g=(g_0,g_1,\ldots,g_m)$ be standard Gaussian in the ambient $\R^{m+1}$.  Its orthogonal projection onto $V_m$ is a standard Gaussian vector on $V_m$, and inner products with $v\in V_m$ are unchanged.  Hence
\[
  \sup_{v\in A_m}\langle g,v\rangle
  \le |g_0|+\frac1m\sum_{j=1}^m|g_j|.
\]
Taking expectations yields
\[
  w(A_m)\le2\E|g_0|=2\sqrt{2/\pi}.
\]

\subsection{Deterministic kernel and VC dimension}

Let $D\subseteq[m]$ denote the distinct indices observed among $J_1,\ldots,J_n$. If $n\le m/2$, choose a set $N\subseteq[m]$ of cardinality $m/2$ with $D\subseteq N$, and define $s_j=-1$ on $N$ and $s_j=1$ off $N$. Then $s\in\{-1,1\}^m$ and $\sum_j s_j=0$. Therefore
\[
  u_s=\frac{(1,s/m)}{\sqrt{1+1/m}}\in A_m.
\]
For each row type,
\[
  \langle r_j,u_s\rangle
  =\frac{1+s_j}{\sqrt{1+1/m}}.
\]
All observed rows have indices in $D\subseteq N$, hence $Xu_s=0$. This proves \eqref{eq:explicit-zero} for every realization.

To prove the VC lower bound, fix any $I\subseteq[m]$ of cardinality $m/2$ and choose one support point $z_j=\ell_0r_j$ for each $j\in I$, where $\ell_0>1$ belongs to the support of $L$. Given any label set $B\subseteq I$, prescribe $s_j=1$ for $j\in B$ and $s_j=-1$ for $j\in I\setminus B$. Among the remaining $m/2$ coordinates, choose signs so that the total sum is zero. This is possible because the partial sum on $I$ has the same parity as $m/2$ and lies between $-m/2$ and $m/2$. For the resulting balanced $s$,
\[
  \ind\{|\langle z_j,u_s\rangle|\ge1\}
  =\ind\{s_j=1\}
  \qquad(j\in I),
\]
since $2\ell_0/\sqrt{1+1/m}>1$. Thus $I$ is shattered and $\VC(\cT_1(A_m)|_{\operatorname{supp}P_m})\ge m/2$.

The random vector is centered because of $\sigma$ and has finite moments of every order because $L$ does.  To verify the spanning claim, observe that $m^{-1}\sum_jr_j=e_0$ and $r_j-r_m=m(e_j-e_m)$ for $j<m$.  These vectors span the anchor coordinate and the zero-sum subspace, hence all of $V_m$.  Therefore, for every nonzero $v\in V_m$, some $\langle r_j,v\rangle$ is nonzero. Conditioning on $J=j$ shows $\E e^{s|\langle Z_m,v\rangle|}=\infty$ for all $s>0$.
\end{proof}

\section{Exact range-space representation}\label{app:exact-range}

\begin{proof}
Let $A=\{u_1,\ldots,u_M\}$ and $Z$ be as in \Cref{thm:exact-range}. The independent sign centers $Z$. Since $L>1$ almost surely,
\[
  \langle Z,u_j\rangle
  =\sigma L\ind\{Y\in C_j\}
\]
implies the claimed small-ball inequality and \eqref{eq:range-exact}.  To justify the VC equality carefully, map a support point $z$ to the incidence vector
$I(z)=(\ind\{|\langle z,u_j\rangle|\ge1\})_{j\le M}$.  The displayed identity says that the set of incidence vectors on $\operatorname{supp}(P)$ equals the set of incidence vectors $(\ind\{y\in C_j\})_{j\le M}$ on $\operatorname{supp}(\mu)$.  By the full-support convention in the theorem, $\operatorname{supp}(\mu)=\Omega$.  Repeated latent points, signs, and radial values only duplicate an incidence vector.  Choosing one representative from each nonempty incidence fiber transfers shattered sets in both directions, proving equality of the support-restricted VC dimensions.

For each sample and each $j$,
\[
  \min\{1,\langle Z_i,u_j\rangle^2\}
  =\ind\{Y_i\in C_j\},
\]
which proves \eqref{eq:clipped-exact}. The untruncated sum in direction $u_j$ is positive exactly when some $Y_i$ belongs to $C_j$. Taking the minimum over $j$ proves the hitting equivalence.

Finally,
\[
  \sup_{j\le M}\langle g,u_j\rangle
  =\frac{g_0+\eps\max_{j\le M}g_j}{\sqrt{1+\eps^2}}.
\]
The expectation of $g_0$ is zero and $\E\max_j g_j\le\sqrt{2\log(2M)}$.  Hence the claimed width bound follows whenever
\[
  \eps\le\frac{\eta}{\sqrt{2\log(2M)}}.
\]
This formula also covers $M=1$ and shows that the width can be sent to zero without altering the represented range system.
\end{proof}

\section{Descent-cone range universality}\label{app:cone-range}

\begin{proof}
The map $T_\eps$ in \eqref{eq:cone-parametrization} is injective, so
\[
  \cR_\eps(T_\eps a)=\|a\|_\infty
\]
defines a norm on $V_\eps$.  Its unit ball is $T_\eps[-1,1]^M$, a centrally symmetric polytope in $V_\eps$; hence $\cR_\eps$ is a polyhedral norm.  For $h=T_\eps a$,
\[
  \cR_\eps(-T_\eps\ind+th)\le1
  \quad\Longleftrightarrow\quad
  \|-\ind+ta\|_\infty\le1.
\]
The latter holds for some $t>0$ if and only if $a\ge0$ coordinatewise. This proves the cone identity in \eqref{eq:cone-parametrization}. Every nonzero $a\ge0$ can be written as $(\ind^\top a)q$ with $q\in\Delta_M$, and normalization gives the parametrization in \eqref{eq:cone-parametrization}.

Let
\[
  \widetilde Z
  =\frac{2\sqrt{1+\eps^2}}{\beta\eps}
    \sigma L\sum_{j=1}^M\ind\{Y\in C_j\}e_j
  \in\R^{M+1},
\]
and let $Z$ be its orthogonal projection onto $V_\eps$. Inner products with $u(q)\in V_\eps$ are unchanged, and
\begin{align*}
  |\langle Z,u(q)\rangle|
  &=\frac{2\sqrt{1+\eps^2}}{
      \beta\sqrt{1+\eps^2\|q\|_2^2}}
      L f_q(Y)\\
  &\ge \frac2\beta f_q(Y).
\end{align*}
Because $0\le f_q\le1$ and $\E f_q\ge\beta$, if
$p_q=\Pp(f_q\ge\beta/2)$, then
\[
  \beta\le\E f_q
  \le p_q+(1-p_q)\beta/2,
\]
and therefore $p_q\ge\beta/(2-\beta)$. This proves the uniform small-ball bound.

For the width, let $g=(g_0,g')$ be standard Gaussian in $\R^{M+1}$. Put $d_q=\sqrt{1+\eps^2\|q\|_2^2}$. If $g_0\ge0$, then $g_0/d_q\le g_0$. If $g_0<0$, then
\[
  \frac{g_0}{d_q}
  =g_0+|g_0|\left(1-\frac1{d_q}\right)
  \le g_0+\frac{\eps^2}{2}|g_0|.
\]
Also
\[
  \frac{\eps\langle g',q\rangle}{d_q}
  \le\eps\max\{0,g_1,\ldots,g_M\}.
\]
Consequently,
\[
  w(A_\eps)
  \le\eps\E\max\{0,g_1,\ldots,g_M\}
      +\frac{\eps^2}{2}\E|g_0|
  \le\eps\sqrt{2\log(2M)}+\frac{\eps^2}{\sqrt{2\pi}}.
\]
Choosing $\eps$ small gives any prescribed $\eta$. Finally, if the latent sample misses $C_j$, then $f_{e_j}(Y_i)=0$ for all $i$, hence $Xu(e_j)=0$.
\end{proof}

\section{Sharp threshold-complexity law}\label{app:sharp-vc}

\begin{proof}
The definition of $\NRE$ asks for a threshold valid for every larger sample size.  If the universal conclusion fails at some $N_0$, then no $n\le N_0$ satisfies the definition, and hence $\NRE\ge N_0+1$.  Each upper bound below applies marginally at every larger sample size.

\paragraph{Upper bound.}
Apply the relative VC estimate \eqref{eq:relative-vc-app} to the support-restricted threshold class $\cT_1(A)$.  For every
$N\ge C[d\log(2/\beta)+\log(1/\delta)]/\beta$, with probability at least $1-\delta$,
$\widehat q_{N,1}(A)\ge\beta/2$.  The deterministic inequality \eqref{eq:occ-suff} then gives $\Lam_N(X;A)\ge\beta/2$.

\paragraph{A universal $d/\beta$ lower bound.}
Assume first that $0<\beta\le1/4$.  Let $N_0=\lfloor d/\beta\rfloor$ and put the uniform measure on $[N_0]$.  Take as ranges all $d$-element subsets.  Each range has mass $d/N_0\ge\beta$, and the class has VC dimension exactly $d$: it shatters a fixed $d$-set, using points outside that set to complete each trace to cardinality $d$, and no $(d+1)$-set can be shattered.  Any hitting set must have at least $N_0-d+1$ points, since the complement of a smaller set contains a $d$-range.  Therefore every sample with fewer than $N_0-d+1\ge c d/\beta$ distinct points misses a range.  The exact representation theorem converts this deterministic miss into $\Lam_n=0$ while preserving threshold VC dimension.  This proves $\NRE\ge c d/\beta$.

For reference, the preceding $d/\beta$ construction has no hidden endpoint loss: with $n_0=N_0-d$, every sample of size $n_0$ misses a range, so
\[
  \NRE(d,\beta,\delta)\ge n_0+1=N_0-d+1
  \ge \frac34\frac d\beta.
\]
Here $\lfloor d/\beta\rfloor+1\ge d/\beta$ and $\beta\le1/4$ give the last inequality.

\paragraph{The coupon-collector term, including $d=1$.}
For $0<\beta\le1/16$, let $M=\lfloor1/(2\beta)\rfloor$ points each have mass $\beta$, put the remaining mass on one extra point, and let the ranges be the $M$ singletons.  This class has VC dimension one.  Let $U$ be the number of singleton ranges missed by $n$ independent observations.  Then
\begin{equation}\label{eq:coupon-moments}
  \E U=M(1-\beta)^n,\qquad
  \operatorname{Var}(U)\le\E U,
\end{equation}
because two missing indicators have covariance
$(1-2\beta)^n-(1-\beta)^{2n}\le0$.  If
$n\le(4\beta)^{-1}\log M$, then $(1-\beta)^n\ge e^{-2\beta n}\ge M^{-1/2}$, so $\E U\ge\sqrt M$.  Chebyshev's inequality yields
$P\{U=0\}\le\operatorname{Var}(U)/(\E U)^2\le M^{-1/2}$.  Here $M\ge8$, so the probability of hitting every singleton is strictly below $3/4$.  Moreover, $M\ge1/(4\beta)$ and hence $\log M\ge c\log(2/\beta)$ on $0<\beta\le1/16$.  The exact representation therefore gives the claimed order on this interval.  For $1/16<\beta\le1/4$, a single range of mass $\beta$ is missed at sample size one with probability $1-\beta\ge3/4$, while $\beta^{-1}\log(2/\beta)\le16\log32$.  Decreasing the same universal constant covers this remaining interval and yields
\begin{equation}\label{eq:coupon-lower}
  \NRE(d,\beta,1/4)\ge c\,\frac{\log(2/\beta)}{\beta}
  \qquad(d\ge1).
\end{equation}
Indeed, on the first interval one may take $n_0=\lfloor(4\beta)^{-1}\log M\rfloor$; the probability of success at $n_0$ is below $3/4$, whence $\NRE\ge n_0+1>(4\beta)^{-1}\log M$.  This makes the integer reduction explicit.
This is why the random-sample problem retains a logarithm for VC dimension one, even though a smallest deterministic net need not.
Since $\NRE$ is nonincreasing as $\delta$ increases, \eqref{eq:coupon-lower} also holds for every $\delta\le1/4$.

\paragraph{The confidence term.}
An explicit range space is $\Omega=\{a,b\}$ with $\mu(a)=\beta$ and $\mathscr C=\{\{a\}\}$; it has full support and VC dimension zero.
Take one range of mass exactly $\beta$.  It is missed with probability $(1-\beta)^n\ge e^{-2\beta n}$ for $\beta\le1/4$.  Consequently, every integer
$n<(2\beta)^{-1}\log(1/\delta)$ has failure probability strictly larger than $\delta$.  Accounting for the integer endpoint only changes a universal constant, and therefore
\begin{equation}\label{eq:confidence-lower}
  \NRE(d,\beta,\delta)
  \ge c\frac{1}{\beta}\log\frac1\delta.
\end{equation}
More precisely, set $T=(2\beta)^{-1}\log(1/\delta)$ and $n_0=\lceil T\rceil-1<T$.  Failure at $n_0$ gives $\NRE\ge n_0+1=\lceil T\rceil\ge T$.
The maximum of the three lower bounds above is at least one third of their sum.  This proves the left side of \eqref{eq:sharp-vc-general}.

\paragraph{Sharp fixed-$d$ low-mass regime.}
Fix $d\ge2$ and take the finite full-support range space supplied by \Cref{lem:finite-net-lower}; decrease $\beta_d$ if necessary so that $\beta_d\le1/4$.  Removing any ranges of mass below $\beta$ cannot increase VC dimension.  By the lemma, every subset of $\Omega$ with fewer than $c_{\mathrm{net}}(d/\beta)\log(1/\beta)$ points misses a retained range of mass at least $\beta$.  The distinct support of a multiset sample has cardinality at most the sample size, so every sample path below this threshold misses such a range.  Applying \Cref{thm:exact-range} gives $\Lam_n=0$ on every path while preserving the support-restricted VC dimension.  For $d=1$, \eqref{eq:coupon-lower} gives the same order.  Finally, $\log(1/\beta)\asymp\log(2/\beta)$ on this range.  Combining this term with \eqref{eq:confidence-lower} proves \eqref{eq:sharp-vc} for every fixed $d$ and all $0<\beta\le\beta_d$.
For the strict inequality in \Cref{lem:finite-net-lower}, put $T_\beta=c_{\mathrm{net}}(d/\beta)\log(1/\beta)$ and $n_0=\lceil T_\beta\rceil-1<T_\beta$.  Every size-$n_0$ sample then fails pathwise, so $\NRE\ge n_0+1=\lceil T_\beta\rceil\ge T_\beta$.
\end{proof}

\section{The anchored isotropic frame}\label{app:frame}

\begin{proof}
We use the probabilistic method.  The proof has four steps.  We first control the empirical covariance, then establish a uniform slab lower bound, and next condition every row submatrix of size $O(p/\log p)$.  A final whitening step makes the frame exactly tight while preserving the common anchor and the three preceding estimates.  Let $M=p^4$ and let
\[
  a_j=(1,\xi_j)\in\R\times\R^{p-1},
  \qquad j\in[M],
\]
where the $\xi_j$ are independent vectors with independent Rademacher coordinates. Define
\[
  \Sigma=\frac1M\sum_{j=1}^M a_j a_j^\top.
\]
We establish three simultaneous events of positive probability.

\subsection{Covariance control}

The vectors $a_j$ are isotropic in the sense that $\E a_j a_j^\top=I_p$ and satisfy $\|a_j\|_2^2=p$ deterministically. Matrix Bernstein, applied to $a_j a_j^\top-I_p$, yields
\begin{equation}\label{eq:sigma-concentration}
  \Pp\left(\|\Sigma-I_p\|_{\mathrm{op}}>1/4\right)
  \le2p\exp(-cM/p).
\end{equation}
For completeness, if $Y_j=a_j a_j^\top-I_p$, then $\|Y_j\|_{\mathrm{op}}\le p+1\le2p$ and
\[
  \E Y_j^2=(p-1)I_p.
\]
The self-adjoint matrix Bernstein inequality \citep[Theorem~6.1.1]{tropp2015matrix}, applied at deviation level $M/4$, has variance proxy $M(p-1)$ and range bound $2p$.  Substitution gives an exponent at most $-cM/p$, proving the displayed estimate.  In particular, on the complementary event
\begin{equation}\label{eq:sigma-comparable}
  \frac34I_p\preceq\Sigma\preceq\frac54I_p.
\end{equation}

\subsection{A uniform empirical slab bound}

For fixed $v=(v_0,v')\in\Sph^{p-1}$ and $a=(1,\xi)$, put $S=\langle a,v\rangle=v_0+\langle\xi,v'\rangle$. Then
\[
  \E S^2=1
\]
and a direct fourth-moment expansion gives
\[
  \E S^4
  =v_0^4+6v_0^2\|v'\|_2^2
    +3\|v'\|_2^4-2\sum_{\ell=1}^{p-1}v_\ell^4
  \le3.
\]
Paley--Zygmund applied to $S^2$ gives
\begin{equation}\label{eq:population-slab}
  \Pp\bigl(|\langle a,v\rangle|\ge1/\sqrt2\bigr)
  \ge1/12.
\end{equation}
The class of two-sided homogeneous slabs
\[
  \bigl\{x:|\langle x,v\rangle|\ge\|v\|_2/\sqrt2\bigr\},
  \qquad v\ne0,
\]
has VC dimension at most $C_0p$.  Indeed, it is a subclass of unions of two affine halfspaces.  On $N$ points the halfspace growth function is at most $(eN/(p+1))^{p+1}$ by Sauer's lemma, so the growth function of two-fold unions is at most its square.  For a sufficiently large numerical $C_0$, this is smaller than $2^N$ whenever $N>C_0p$, proving the claim.  We use the following explicit form of the VC uniform law \citep{boucheron2013concentration}.  For a class of sets with VC dimension $v$ and $M\ge v$, symmetrization, Hoeffding's inequality, and Sauer's lemma give
\[
  \Pp\left\{\sup_C|\mu_M(C)-\mu(C)|>r\right\}
  \le8\left(\frac{2eM}{v}\right)^v e^{-Mr^2/32}.
\]
Take $v=C_0p$ and $r=1/24$.  Since $M=p^4$, the right side is at most $e^{-cM}$ for all sufficiently large $p$.  Combining this event with \eqref{eq:population-slab} gives
\begin{equation}\label{eq:empirical-slab}
  \inf_{v\ne0}\frac1M
  \left|\left\{j:
    |\langle a_j,v\rangle|\ge\|v\|_2/\sqrt2
  \right\}\right|
  \ge1/24.
\end{equation}

\subsection{All small row submatrices are well conditioned}

Fix $D\subseteq[M]$ with $s=|D|$ and write $\Xi_D\in\R^{(p-1)\times s}$ for the matrix whose columns are $\xi_j$, $j\in D$. For fixed $x\in\Sph^{s-1}$, the coordinates $Y_\ell=\sum_{j\in D}x_j\xi_{j\ell}$ of $\Xi_Dx$ are independent, mean-zero, and variance-one.  Hoeffding's lemma gives $\|Y_\ell\|_{\psi_2}\le C\|x\|_2=C$, so $\|Y_\ell^2-1\|_{\psi_1}\le C'$.  Bernstein's inequality for these centered squared values gives
\begin{equation}\label{eq:fixed-x-ri}
  \Pp\left(
     \left|\|\Xi_D x\|_2^2-(p-1)\right|>(p-1)/2
  \right)
  \le2e^{-cp}.
\end{equation}
A $1/8$-net $\mathcal N$ of $\Sph^{s-1}$ has cardinality at most $17^s$ \citep{vershynin2018high}.  To spell out the net step, put
$H=\Xi_D^\top\Xi_D-(p-1)I_s$.  For every symmetric $H$ and every $\varepsilon$-net with $\varepsilon<1/2$,
\[
  \|H\|_{\mathrm{op}}
  \le\frac{1}{1-2\varepsilon}\max_{x\in\mathcal N}|x^\top Hx|.
\]
This follows by choosing, for a maximizing unit vector $y$, a net point $x$ with $\|x-y\|\le\varepsilon$ and bounding
$|y^\top Hy-x^\top Hx|\le2\varepsilon\|H\|_{\mathrm{op}}$.
Apply \eqref{eq:fixed-x-ri} to every $x\in\mathcal N$.  On the resulting event, the last display with $\varepsilon=1/8$ puts all eigenvalues of $\Xi_D^\top\Xi_D$ between absolute multiples of $p$.  Hence
\begin{equation}\label{eq:fixed-D-ri}
  \Pp\left(
    cpI_s\npreceq\Xi_D^\top\Xi_D
    \text{ or }
    \Xi_D^\top\Xi_D\npreceq CpI_s
  \right)
  \le2\exp(-c'p+Cs).
\end{equation}
There are at most $(eM/s)^s$ subsets of cardinality $s$.  Thus the total failure probability for all sets of a fixed size $s$ is at most
\[
  2\exp\{-c'p+Cs+s\log(eM/s)\}.
\]
For $M=p^4$ and $s\le s_{\max}=\lfloor c_0p/\log p\rfloor$, the two positive terms are at most $6c_0p$ for large $p$.  Choose $c_0<c'/12$.  The failure probability for each size $s\le s_{\max}$ is then at most $2e^{-c'p/2}$, and summing over the at most $s_{\max}\le p$ possible sizes gives
\[
  2s_{\max}e^{-c'p/2}\le e^{-c''p}
\]
for all sufficiently large $p$.  Thus, with probability at least $1-e^{-c''p}$,
\begin{equation}\label{eq:all-D-xi}
  cpI_{|D|}\preceq\Xi_D^\top\Xi_D\preceq CpI_{|D|}
  \qquad
  \text{for every }|D|\le c_0p/\log p,
\end{equation}

Let $A_D$ be the matrix with rows $a_j^\top$, $j\in D$. For $x\in\R^D$,
\[
  \|A_D^\top x\|_2^2
  =(\ind^\top x)^2+\|\Xi_D x\|_2^2.
\]
The lower inequality in \eqref{eq:all-D-xi} is unchanged, while
$(\ind^\top x)^2\le s\|x\|_2^2\le c_0p\|x\|_2^2/\log p$.  Thus \eqref{eq:all-D-xi} implies
\begin{equation}\label{eq:all-D-a}
  cpI_{|D|}\preceq A_D A_D^\top\preceq CpI_{|D|}
\end{equation}
for all such $D$.

\subsection{Whitening}

The covariance, slab, and restricted-Gram failure probabilities are at most $2pe^{-cp^3}$, $e^{-cp^4}$, and $e^{-c''p}$, respectively.  Their sum is below one for all sufficiently large $p$.  Fix a realization on the three events and define
\[
  b_j=\Sigma^{-1/2}a_j,
  \qquad
  u_\star=\Sigma^{1/2}e_0.
\]
Then
\[
  \frac1M\sum_j b_j b_j^\top=I_p.
\]
Also
\[
  \|u_\star\|_2^2=e_0^\top\Sigma e_0=1,
  \qquad
  \langle b_j,u_\star\rangle
  =\langle a_j,e_0\rangle=1.
\]
For a unit vector $v$, set $q=\Sigma^{-1/2}v$. By \eqref{eq:sigma-comparable}, $\|q\|_2\ge\sqrt{4/5}$. Applying \eqref{eq:empirical-slab} to $q$ gives at least $M/24$ indices with
\[
  |\langle b_j,v\rangle|
  =|\langle a_j,q\rangle|
  \ge\|q\|_2/\sqrt2
  \ge\sqrt{2/5}.
\]
Thus one may take $a_0=1/2$ and $b_0=1/24$. Finally,
\[
  B_D B_D^\top=A_D\Sigma^{-1}A_D^\top,
\]
and \eqref{eq:sigma-comparable}, \eqref{eq:all-D-a} give \eqref{eq:gram}.
\end{proof}

\section{The isotropic separation}\label{app:isotropic}

\begin{proof}
Fix the frame from \Cref{lem:frame}, choose the theorem constant $c\le c_0$, and assume $1\le k\le cp/\log p$.  Throughout this proof, $D\subseteq[M]$ ranges only over $|D|\le k$, and $B_D$ denotes the corresponding row submatrix.  If $D\ne\varnothing$, \eqref{eq:gram} makes $G_D=B_DB_D^\top$ positive definite, and we set
\[
  P_D=B_D^\top G_D^{-1}B_D.
\]
For $D=\varnothing$, set $P_D=0$ and $s_D=0$.  In either case $P_D$ is the orthogonal projector onto the row span of $B_D$.  We first quantify the normalized residual $u_D$, then control its generated cone, and finally verify the distributional claims.  Since $B_Du_\star=\ind$ for nonempty $D$,
\begin{equation}\label{eq:projection-size}
  \|P_D u_\star\|_2^2
  =\ind^\top G_D^{-1}\ind.
\end{equation}
The Gram bounds imply, for $d=|D|\ge1$,
\begin{equation}\label{eq:projection-comparable}
  \frac{d}{C_1p}
  \le s_D^2\defeq\|P_D u_\star\|_2^2
  =\ind^\top G_D^{-1}\ind
  \le\frac{d}{c_1p}.
\end{equation}
For $D=\varnothing$, the same bounds hold with $d=s_D=0$.  For all sufficiently large $p$, the upper bound is at most $c/(c_1\log p)\le1/2$.  Thus $(I-P_D)u_\star\ne0$ and every $u_D$ used in the construction is well defined.

Since $(I-P_D)u_\star$ is orthogonal to $P_D u_\star$,
\[
  \langle u_D,u_\star\rangle=\sqrt{1-s_D^2}
\]
and
\begin{equation}\label{eq:distance-exact}
  \|u_D-u_\star\|_2^2
  =2\left(1-\sqrt{1-s_D^2}\right)
  =\frac{2s_D^2}{1+\sqrt{1-s_D^2}}.
\end{equation}
In particular, $s_D^2\le\|u_D-u_\star\|_2^2\le2s_D^2$ when $s_D^2\le1/2$, so \eqref{eq:projection-comparable} gives $\|u_D-u_\star\|_2^2\asymp d/p$ with explicit absolute constants.  Also $B_D u_D=0$.

\subsection{Geometry and width of the cone}

Let
\[
  \cU_{p,k}=\{u_D:|D|\le k\}.
\]
Its cardinality satisfies
\begin{equation}\label{eq:number-generators}
  |\cU_{p,k}|
  \le\sum_{d=0}^k\binom Md
  \le2\left(\frac{eM}{k}\right)^k
\end{equation}
for $1\le k\le M/2$. By \eqref{eq:distance-exact}, all generators lie in a cap of radius
\[
  r\le C\sqrt{k/p}
\]
about $u_\star$.  Since $k\le cp/\log p$, choosing $c$ small and $p$ large ensures $r<1$, as required by \Cref{lem:cap-cone}.  Applying that lemma with $M=p^4$ gives
\begin{align*}
  w(A_{p,k})
  &\le C\sqrt{\frac{k}{p}}
       \sqrt{k\log(eM/k)}
       +C\frac{k}{\sqrt p}\\
  &\le C\left[
        k\sqrt{\frac{\log p}{p}}+\frac{k}{\sqrt p}
       \right].
\end{align*}

The cone is full dimensional. Indeed, $u_\varnothing=u_\star$, and for $D=\{j\}$ the vector $u_{\{j\}}$ is a nonzero normalization of
\[
  u_\star-\frac{1}{\|b_j\|_2^2}b_j.
\]
Thus every $b_j$ belongs to the linear span of $u_\star$ and $u_{\{j\}}$. The tight-frame identity implies that the $b_j$ span $\R^p$. The cone is pointed because
\[
  \inf_{|D|\le k}\langle u_D,u_\star\rangle
  \ge1/\sqrt2>0.
\]
It is polyhedral because it has finitely many generators. \Cref{lem:cone-norm} therefore realizes it as the descent cone of a polyhedral norm.

\subsection{Heavy-tailed design}

Let $J$ be uniform on $[M]$, let $\sigma$ be a Rademacher sign independent of $(J,L)$, and set
\[
  \rho=L/\sqrt{\E L^2},
  \qquad
  Z_{\HT}=\sigma\rho b_J.
\]
Then
\[
  \E Z_{\HT}=0,
  \qquad
  \E Z_{\HT}Z_{\HT}^\top
  =\E\rho^2\frac1M\sum_j b_j b_j^\top=I_p.
\]
Since $\rho\ge(\E L^2)^{-1/2}$ and the slab property in \eqref{eq:frame-properties} holds,
\[
  \inf_{v\in\Sph^{p-1}}
  \Pp\left(
    |\langle Z_{\HT},v\rangle|
    \ge a_0/\sqrt{\E L^2}
  \right)
  \ge b_0.
\]
For the Gaussian law, $\langle G,v\rangle\sim\Normal(0,1)$ for every unit $v$.  Thus the explicit common choice
\[
  \alpha_0=\min\left\{\frac{a_0}{\sqrt{\E L^2}},\frac12\right\},
  \qquad
  \beta_0=\min\left\{b_0,\Pp(|G_1|\ge\alpha_0)\right\}
\]
gives \eqref{eq:common-sb} for both designs.

The law of $Z_{\HT}$ has finite moments of every order. If $v\ne0$, the tight-frame identity gives
\[
  \frac1M\sum_j\langle b_j,v\rangle^2=\|v\|_2^2>0,
\]
so some row type has nonzero inner product with $v$. Conditioning on that type shows
\[
  \E e^{s|\langle Z_{\HT},v\rangle|}=\infty
  \qquad(s>0).
\]

Given any sample of size $n\le k$, let $D$ be the set of distinct values among $J_1,\ldots,J_n$. Then $u_D\in A_{p,k}$ and
\[
  \langle Z_{\HT,i},u_D\rangle
  =\sigma_i\rho_i\langle b_{J_i},u_D\rangle=0
\]
for every $i$. This proves deterministic failure. The Gaussian claim follows directly from \eqref{eq:gordon-app}.

\subsection{Constant-width Gaussian specialization}

Fix a sufficiently small absolute $a>0$ and take
\[
  k=\left\lfloor a\sqrt{\frac{p}{\log p}}\right\rfloor.
\]
For all sufficiently large $p$, this $k$ is admissible in \Cref{thm:isotropic}, and \eqref{eq:isotropic-width} gives
\[
  w(A_{p,k})
  \le C\left(a+\frac{a}{\sqrt{\log p}}\right)
  \le C_\star
\]
for an absolute $C_\star$.  Let $t=\sqrt{2\log(1/\delta)}$.  If
\[
  n\ge\max\{2,16C_\star^2,32\log(1/\delta)\},
\]
then $\sqrt{n-1}\ge\sqrt{n/2}$, $C_\star\le\sqrt n/4$, and $t\le\sqrt n/4$.  Therefore \eqref{eq:gauss-lower} yields
\[
  \frac1{\sqrt n}\inf_{u\in A_{p,k}}\|X_{\Gauss}u\|_2
  \ge\frac1{\sqrt2}-\frac12=:c_\star>0
\]
with probability at least $1-\delta$.  Hence $\Lam_n(X_{\Gauss};A_{p,k})\ge c_\star^2$.  Enlarging one universal constant converts the displayed condition on $n$ to $n\ge C_G[1+\log(1/\delta)]$.  The heavy-tailed failure holds on every path for $n\le k$, and, after reducing $a$ by a factor of two to absorb the floor, $k\ge c_H\sqrt{p/\log p}$.  This proves \Cref{cor:constant-isotropic}.

\subsection{Radius and affine dimension}

The radius upper bound follows from \eqref{eq:cap-cone-radius}:
\[
  r(A_{p,k})\le C\sqrt{k/p}.
\]
Choose a set $D$ of cardinality $k$. By \eqref{eq:distance-exact},
\[
  \|u_D-u_\star\|_2\ge c\sqrt{k/p},
\]
so every enclosing ball has radius at least half this distance. Therefore
\[
  r(A_{p,k})^2\asymp k/p.
\]
The cone $C_{p,k}$ is full dimensional.  Its intersection with the sphere therefore contains a nonempty open spherical patch.  If this patch lay in a proper affine hyperplane $\{x:\langle a,x\rangle=b\}$, then the affine function $x\mapsto\langle a,x\rangle-b$ would vanish on an open subset of the sphere.  Differentiating along all tangent directions on that subset forces $a$ to be parallel to every point in the patch, which is impossible unless $a=0$ and then $b=0$.  Hence no proper affine hyperplane contains the patch, and $q(A_{p,k})=p$.
\end{proof}

\section{Smooth-density robustness}\label{app:smooth}

\begin{proof}
The convolution in \eqref{eq:smooth} is centered and isotropic.  Write $c_\tau=\sqrt{1+\tau^2}$ and let $\phi_\tau$ be the density of $\tau H$.  The smoothed row has density
\[
  f_\tau(x)=c_\tau^p\E\phi_\tau(c_\tau x-Z_{\HT}),
\]
which is strictly positive for every $x\in\R^p$.  Every derivative of a Gaussian density is a polynomial times that density and is bounded on $\R^p$.  Dominated differentiation under the expectation therefore gives $f_\tau\in C^\infty(\R^p)$.  Polynomial moments remain finite.

We also verify the stronger directional heavy-tail claim.  Fix $v\ne0$.  Tightness of the frame provides a $j$ with $c_j=\langle b_j,v\rangle\ne0$.  Conditional on $J=j$, and on the independent event $|\tau\langle H,v\rangle|\le1$, which has positive probability,
\[
  |\sigma\rho c_j+\tau\langle H,v\rangle|
  \ge |c_j|\rho-1.
\]
The event is independent of $\rho$.  Hence, after the harmless factor $1/\sqrt{1+\tau^2}$, every positive exponential moment of $|\langle Z_{\HT,\tau},v\rangle|$ is infinite.

Let $\alpha_0,\beta_0$ be global small-ball constants for $Z_{\HT}$. Choose $\tau_0>0$ such that
\[
  \Pp(|\tau H_1|\le\alpha_0/2)\ge1/2
  \qquad(0<\tau\le\tau_0).
\]
For a unit $v$, the heavy and Gaussian components are independent. On the event
\[
  |\langle Z_{\HT},v\rangle|\ge\alpha_0,
  \qquad
  |\tau\langle H,v\rangle|\le\alpha_0/2,
\]
one has
\[
  |\langle Z_{\HT,\tau},v\rangle|
  \ge\frac{\alpha_0}{2\sqrt{1+\tau_0^2}}.
\]
Thus one may take $\bar\alpha=\alpha_0/(2\sqrt{1+\tau_0^2})$ and $\bar\beta=\beta_0/2$.

For each fixed sample size $n\le k$, let $D$ be its latent row-type set. The direction $u_D$ depends only on the $J_i$ and is independent of the Gaussian perturbations. Since the heavy component is annihilated,
\[
  \Lam_n(X_{\HT,\tau};A_{p,k})
  \le\frac1n\sum_{i=1}^n\langle Z_{\HT,\tau,i},u_D\rangle^2
  =\frac{\tau^2}{1+\tau^2}\frac1n
    \sum_{i=1}^n\langle H_i,u_D\rangle^2.
\]
Conditionally on $(J_1,\ldots,J_n)$, the final sum has the $\chi_n^2$ law.  Thus $\chi_n^2\le2n$ implies $\Lam_n\le2\tau^2$.
For completeness, Markov's inequality and
$\E e^{\chi_n^2/4}=2^{n/2}$ give
\[
  \Pp(\chi_n^2>2n)
  \le e^{-n/2}2^{n/2}
  =\exp\left[-\frac{1-\log2}{2}n\right].
\]
This proves \eqref{eq:smooth-upper} with $c_s=(1-\log2)/2$.
\end{proof}

\section{The dimension--radius bound}\label{app:radius}

\begin{proof}
Choose a countable dense subset $A_0\subseteq A$.  For every matrix $X$ and every $h\in\R^p$, continuity gives
\[
  \inf_{u\in A_0}\|Xu\|_2=\inf_{u\in A}\|Xu\|_2,
  \qquad
  \sup_{u\in A_0}\langle h,u\rangle
  =\sup_{u\in A}\langle h,u\rangle.
\]
Moreover, $\SB_\alpha(P,A_0)\ge \SB_\alpha(P,A)\ge\beta$.  We may therefore apply the countable, measurable form of the small-ball lemma to $A_0$; the displayed identities transfer its conclusion back to $A$.

Because $A$ is bounded and the ambient space is finite dimensional, the continuous coercive function $a\mapsto\sup_{u\in A}\|u-a\|_2$ attains its minimum (passing to the closure of $A$ does not change the supremum).  Orthogonally projecting any minimizer onto $\aff(A)$ cannot increase its distance to any $u\in A$.  We may therefore choose a minimum enclosing center $a\in\aff(A)$.  Let
\[
  L_A=\spanop(A-a),
  \qquad \dim L_A=q(A).
\]
For
\[
  H_n=\frac1{\sqrt n}\sum_{i=1}^n\eps_i Z_i,
\]
the Rademacher signs give $\E H_n=0$.  For each realization,
$\sup_{u\in A}\langle H_n,u\rangle=\langle H_n,a\rangle+\sup_{u\in A}\langle H_n,u-a\rangle$, and $\E\langle H_n,a\rangle=0$.  Hence
\begin{align*}
  W_n(A;P)
  &=\E\sup_{u\in A}\langle H_n,u\rangle
    =\E\sup_{u\in A}\langle H_n,u-a\rangle\\
  &\le r(A)\E\|P_{L_A} H_n\|_2\\
  &\le r(A)\sqrt{\E\|P_{L_A} H_n\|_2^2}.
\end{align*}
Isotropy gives $\E H_n H_n^\top=I_p$, and therefore
\[
  \E\|P_{L_A} H_n\|_2^2
  =\tr(P_{L_A})=q(A).
\]
Thus
\[
  W_n(A;P)\le r(A)\sqrt{q(A)}.
\]
Apply \Cref{lem:small-ball-standard} with $\xi=\alpha/2$. Since $\SB_\alpha(P,A)\ge\beta$, this yields \eqref{eq:radius-bound}.

To derive \eqref{eq:radius-sample}, choose $t=\sqrt{2\log(1/\delta)}$ and require separately
\[
  2r(A)\sqrt{q(A)}\le\frac{\alpha\beta}{8}\sqrt n,
  \qquad
  \frac{\alpha t}{2}\le\frac{\alpha\beta}{8}\sqrt n.
\]
Under the resulting sample-size condition, the right side of \eqref{eq:radius-bound} is at least $\alpha\beta\sqrt n/4$, so $\Lam_n(X;A)\ge\alpha^2\beta^2/16$.

Finally, Gaussian width is always at most the dimension--radius scale. Indeed,
\[
  w(A)
  =\E\sup_{u\in A}\langle g,u-a\rangle
  \le r(A)\E\|P_{L_A}g\|_2
  \le r(A)\sqrt{q(A)}.
\]
The matching identities for $A_{p,k}$ were proved in the preceding appendix.
\end{proof}

\section{Convex-recovery consequences}\label{app:recovery}

\begin{proof}
Suppose $A=\cD(\cR,\theta^\star)\cap\Sph^{p-1}$ and $Xu=0$ for some $u\in A$. By the definition of the descent cone, there is $t>0$ such that
\[
  \cR(\theta^\star+tu)\le\cR(\theta^\star).
\]
At the same time $X(\theta^\star+tu)=X\theta^\star$. Hence $\theta^\star$ cannot be the unique solution of \eqref{eq:convex-program}. This proves the exact-recovery statements in \Cref{sec:consequences}.

For the smoothed construction, \eqref{eq:smooth-upper} supplies a unit descent direction $u$ with $\|Xu\|_2\le\sqrt2\tau\sqrt n$.  In normalized units,
\[
  \kappa_n(X;A)
  :=\frac1{\sqrt n}\inf_{v\in A}\|Xv\|_2
  =\sqrt{\Lam_n(X;A)}
  \le\sqrt2\tau.
\]
Equivalently, the realized-design inverse modulus
\[
  K_n(X;A):=\frac1{\kappa_n(X;A)}
\]
is at least $1/(\sqrt2\tau)$, with the convention $1/0=\infty$, and the quadratic modulus $1/\Lam_n$ is at least $1/(2\tau^2)$.  This conclusion is deterministic after $X$ is realized.  In particular, the witness $u$ and a descending comparison point $\theta_1=\theta^\star+su$ may depend on that realization.  Their normalized observation distance is at most $\sqrt2\tau s$, so the statement controls the conditioning of the realized inverse problem; it does not assert a minimax lower bound for two parameters fixed before sampling.
\end{proof}